\documentclass[10pt,letterpaper]{article}

\usepackage[T1]{fontenc}
\usepackage[utf8]{inputenc}
\usepackage{dropd_preprint}
\usepackage{xcolor}
\usepackage[authoryear,round]{natbib}
\setcitestyle{citesep={;},aysep={,},yysep={;}}
\usepackage{amsmath,amssymb,amsthm,mathtools}
\usepackage{microtype}
\usepackage{booktabs}
\usepackage{adjustbox}
\usepackage{graphicx}
\usepackage{fontawesome5}
\usepackage{subcaption}  
\usepackage{algorithm,algpseudocode}
\usepackage{placeins}

\theoremstyle{plain}
\newtheorem{theorem}{Theorem}
\newtheorem{proposition}[theorem]{Proposition}
\theoremstyle{definition}
\newtheorem{definition}[theorem]{Definition}
\newtheorem{assumption}[theorem]{Assumption}
\theoremstyle{remark}

\usepackage{enumitem}

\newcommand{\EE}{\mathbb E}
\DeclareMathOperator{\KL}{KL}
\DeclareMathOperator{\clip}{clip}
\DeclareMathOperator*{\argmin}{arg\,min}

\usepackage[most]{tcolorbox}      \definecolor{promptbg}{HTML}{F7F7F9}
\definecolor{promptfr}{HTML}{C7C7CE}
\newtcblisting{promptbox}{breakable, enhanced, listing only,
  colback=promptbg, colframe=promptfr, boxrule=0.5pt, arc=2pt,
  left=6pt, right=6pt, top=4pt, bottom=4pt, toptitle=1pt, bottomtitle=1pt,
  fonttitle=\footnotesize\bfseries,
  coltitle=black, colbacktitle=promptbg, borderline horizontal={0.5pt}{0pt}{promptfr},
  listing options={basicstyle=\ttfamily\scriptsize, breaklines=true,
                   breakindent=0pt, columns=fullflexible, keepspaces=true,
                   showstringspaces=false, aboveskip=0pt, belowskip=0pt}}

\DeclareRobustCommand{\revision}[1]{#1}
\usepackage{arydshln}
\usepackage{colortbl}

\definecolor{dropdrow}{RGB}{234,245,253}   \definecolor{dropdblue}{RGB}{45,149,233}   \definecolor{dropdwarm}{RGB}{253,93,64}    \par
\newcommand{\gain}[1]{\textsubscript{\textcolor{dropdwarm}{+#1}}}
\newcommand{\loss}[1]{\textsubscript{\textcolor{dropdblue}{-#1}}}

\title{Dr. OPD: Learning What to Follow for Optimal On-Policy Distillation of Large Language Models}
\author{Zhenyu Wang\thanks{Equal contribution.}\quad
  Tianze Wang\footnotemark[1]\quad
  Linjun Zhang\thanks{Co-corresponding authors.}\quad
  Yifan Hu\footnotemark[2]\\[0.5em]
  \normalsize Department of Statistics, Rutgers University\\[0.35em]
  \normalfont\small\texttt{\{zw425,tw522\}@stat.rutgers.edu}\\
  \normalfont\small\texttt{zlj11112222@gmail.com}\quad
  \texttt{yifan.hu@rutgers.edu}
}

\date{}

\usepackage{hyperref}
\hypersetup{
  colorlinks=true,
  citecolor=magenta,
  filecolor=blue,
  linkcolor=blue,
  urlcolor=blue,
  pdftitle={Dr. OPD: Learning What to Follow for Optimal On-Policy Distillation of Large Language Models},
  pdfauthor={Zhenyu Wang, Tianze Wang, Linjun Zhang, Yifan Hu},
  pdfsubject={On-policy distillation of large language models}
}

\begin{document}
\maketitle
\makeatletter
\begingroup
\renewcommand{\thefootnote}{}
\let\@footnotetext\H@@footnotetext
\footnotetext{\faGithub\enspace Code: \url{https://github.com/zywang0701/Dr-OPD}.}
\endgroup
\makeatother

\begin{abstract}
On-policy distillation (OPD) trains a student on its own generated responses using dense, token-level supervision from a stronger teacher. Vanilla OPD treats all teacher signals equally, assuming that the teacher's supervision is equally important for every token. However, teacher signals at different tokens may have very different effects on the student's performance: some correct important reasoning errors, while others have little effect on the final answer. 
Motivated by this observation, we introduce \textbf{Dr. OPD} (\emph{OPD Done Right}), which defines the optimal weighted OPD to maximize the student's performance.
We formulate Dr. OPD as a bilevel optimization problem in which the student learns from weighted teacher supervision, while the weights are selected to maximize the expected reward of the resulting student. To solve Dr. OPD, we develop an efficient iterative solver that updates the token weights and student policy alternatively. At each round, it updates weights in closed form and then takes one gradient step on the resulting weighted OPD objective. Under regularity conditions, we show that this weighted update achieves a higher expected reward than a vanilla OPD update. Empirically, across strong-to-weak and same-size distillation on math and code, Dr. OPD consistently outperforms all evaluated baselines. In particular, in the strong-to-weak distillation setting, Dr. OPD improves average math performance by $9.7$ points over vanilla OPD, and enables the smaller student to surpass its larger teacher.
\end{abstract}

\section{Introduction}
\label{sec:introduction}

On-policy distillation (OPD) has become an important post-training approach for transferring capabilities from a stronger teacher to a student model that is cheaper to deploy or has undergone less training
\citep{agarwal2024onpolicydistillationlanguagemodels,deepseekai2026deepseekv4,kimiteam2026kimik3}. Unlike outcome-reward reinforcement learning methods like RLOO and GRPO \citep{ahmadian2024rloo,shao2024deepseekmathpushinglimitsmathematical}, which update the model based on a scalar reward for an entire generated trajectory, OPD leverages the teacher model to provide dense, token-level supervision along with the student's generated responses \citep{gu2026minillmonpolicydistillationlarge}.
Specifically, OPD trains the student by minimizing the divergence between the next-token distributions of the student and the teacher at each token visited by the student. Therefore, it provides a learning signal at every generated token rather than only at the end of a response.

Vanilla OPD assigns equal importance to the teacher's supervisions across tokens. 
However, given a long response generated by the student, should every teacher signal be treated equally? Consider a student solving a math problem. The teacher could either correct a crucial reasoning step or simply prefer different wording under the same mathematical meaning. These two signals are supposed to enter the distillation loss with different weights, as their effects on the final answer can be very different. Indeed, recent work has likewise shown that not every signal  from the teacher across each token is equally useful to the student \citep{armandpour2026unmasking,xie2026positionbias,xing2026tropd}.

To study how the student should trust teacher signals across tokens, we revisit the goal of OPD. In particular, note that the purpose of distillation is ultimately to improve the student's task performance. Minimizing the divergence towards the stronger teacher policy is only an approach used to achieve this goal.
Indeed, the teacher signal only tells us what the teacher prefers at each token, but not whether following the preference improves the student's performance. This motivates us to adaptively choose the weight of each teacher signal according to its usefulness for the student. In this paper, we therefore study how to weight token-level teacher signals so that the student achieves the best performance.

Following this principle, we introduce \textbf{Dr. OPD} (\emph{OPD Done Right}), which assigns weights $w(s,v)$ to each teacher signal on the tokens $v$ and context $s = (x,y)$ according to its usefulness for improving the student, as illustrated in Figure~\ref{fig:dr_opd}. A weight $w_t>1$ places more trust in the teacher at token $t$, while $0<w_t<1$ places less trust. Vanilla OPD sets equal weights of $w_t\equiv 1$ across all tokens.
We note that for any positive weight $w$, the corresponding $w$-weighted OPD remains a valid distillation objective that pushes the student toward the teacher policy, just as vanilla OPD does. Dr. OPD then selects, among these valid weights, the one that gives the best resulting student:
\begin{tcolorbox}[
  colback=dropdrow, colframe=dropdblue!65!white,
  boxrule=0.6pt, arc=2pt,
  left=8pt, right=8pt, top=2pt, bottom=2pt,
  before skip=10pt, after skip=10pt
]
\begin{equation}
    \max_{w}
    \Bigl\{
    \textrm{Student's performance after training with $w$-weighted OPD}
    \Bigr\}.
    \label{eq:dr_opd_draft}
\end{equation}
\end{tcolorbox}
The performance is measured by an outcome reward based on its generated responses~\citep{ouyang2022training,shao2024deepseekmathpushinglimitsmathematical}.
We formally formulate Dr. OPD as a bilevel optimization in Section~\ref{sec:dr_opd}.

\begin{figure}[t]
    \centering
\includegraphics[width=\linewidth,trim=0 70bp 0 65bp,clip]{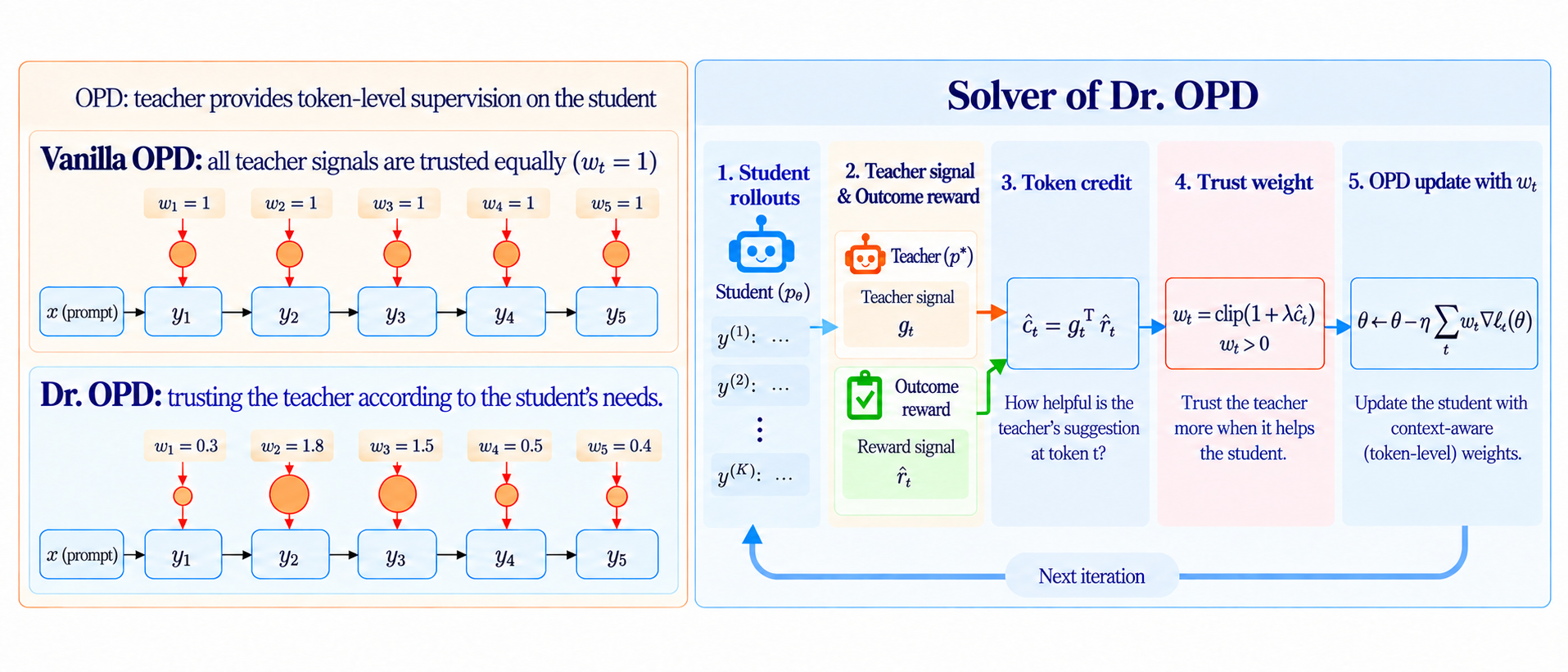}
    \caption{\textbf{Illustration of Dr. OPD.}
Left: vanilla OPD assigns the same weight to every teacher signal, while Dr. OPD adapts the weights according to their benefit to the student.
Right: student rollouts receive teacher and outcome-reward signals, whose gradient alignment determines token credits.
These credits are converted into weights for distillation, where $\lambda$ controls the weighting strength.
}
    \label{fig:dr_opd}
\end{figure}

Solving \eqref{eq:dr_opd_draft} exactly would require repeatedly training the student under different weight functions and then selecting the best weight, which is computationally infeasible. We therefore develop an iterative solver, illustrated in Figure~\ref{fig:dr_opd}. At each round, we build a surrogate objective of \eqref{eq:dr_opd_draft} to update the weights, leading to a simple closed-form update instead of learning a complex weight function. Then the student performs a weighted OPD update. Specifically, the student generates its own responses, from which we obtain the teacher's token-level supervision and the outcome reward signal. We use these two signals to assess whether following the teacher more closely at each token would improve the student's performance. Under regularity conditions, we show that the student, after the weighted update, achieves a higher reward than the corresponding vanilla OPD update.

To summarize, we make the following contributions:
\begin{itemize}[leftmargin=1.5em, topsep=1pt, itemsep=1pt]
\item \textbf{Bilevel Formulation of Distillation.}
We introduce the concept of learning useful signals from the teacher via weighted OPD and propose Dr. OPD, a bilevel optimization formulation that defines the optimal weighted OPD to maximize the student's performance. The lower-level problem trains the student under a given weight function, while the upper-level problem selects the weights according to the performance of the resulting student.

\item \textbf{Efficient Algorithm.}
Despite the challenge of solving Dr. OPD directly, we develop an efficient iterative solver with a closed-form weight update followed by a gradient update of the weighted OPD. 
To incorporate outcome rewards into the weight updates, it requires computing an inner product between two parameter-dimensional gradients for every token, which is prohibitively expensive for large models. We instead express each inner product as the product of two scalars, and then these products for all tokens can be computed altogether using a single Jacobian--vector product (JVP); see Section~\ref{sec:methodology} for details.

\item \textbf{Significant Empirical Gains.}
We evaluate Dr. OPD across strong-to-weak and same-size distillation on math and code tasks, where it achieves the best  performance across all evaluated settings and consistently outperforms all baselines. As shown in Figure \ref{fig:gains}, Dr. OPD improves Qwen3-1.7B over vanilla OPD by $+9.7$ points on math and even surpasses the larger Qwen3-4B teacher (35.0 vs.\ 34.1). In same-size distillation, it also surpasses both RL-trained teachers.
\end{itemize}
\begin{figure}[!ht]
    \centering
    \includegraphics[width=\linewidth,trim=0 123bp 0 76bp,clip]{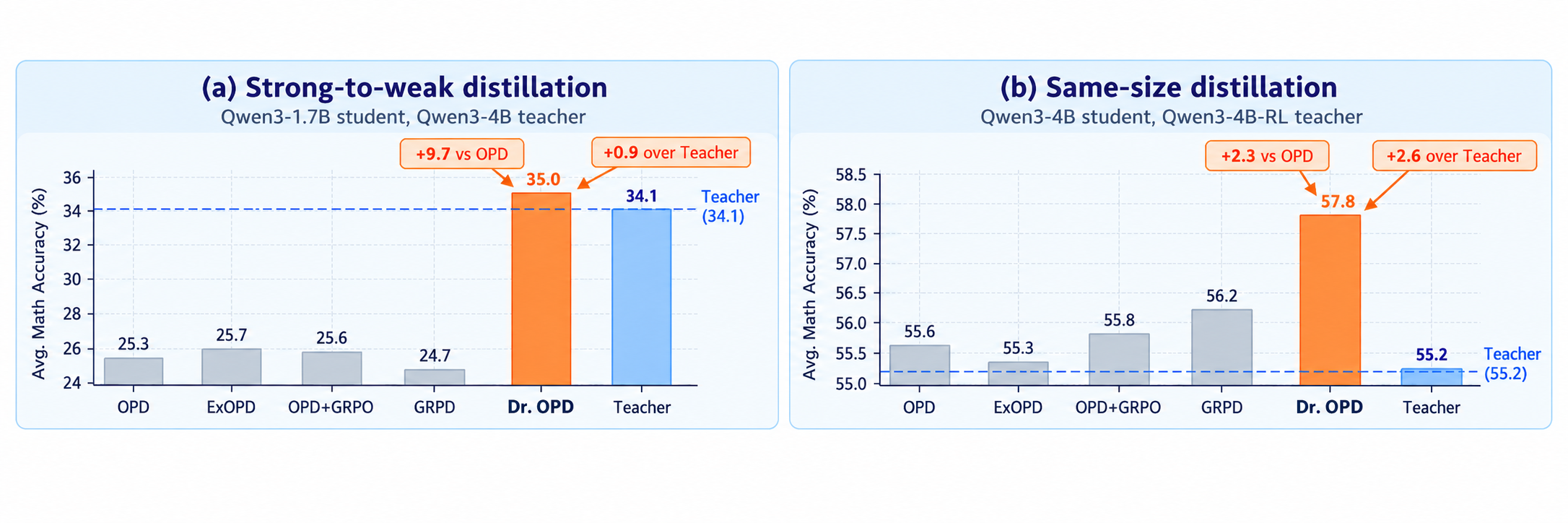}
    \caption{\textbf{Empirical gains.}
Across both distillation setups, Dr. OPD consistently outperforms existing baselines and even produces a student that surpasses its teacher (Full results in Section \ref{sec:exp-results}).}
    \label{fig:gains}

\end{figure}

\section{Weighted On-Policy Distillation}
\label{sec:formulation}

OPD trains a student model on its generated responses under the supervision from a stronger frozen teacher model. Let $x$ be a prompt, and $y=(y_1, \ldots, y_T)$ a response generated by the student. At step $t$, the context $s_t=(x, y_{<t})$ contains the prompt and all the preceding tokens generated before $t$. We denote $p_\theta(v\mid s_t)$ as the student's probability of generating the next token $v\in \mathcal{V}$, and $p^*(v\mid s_t)$ as the teacher's probability in the same context, where $\mathcal{V}$ is the vocabulary of tokens.

Vanilla OPD aims to train the student policy to mimic the teacher policy using the reverse KL divergence:
\begin{equation}
    \min_\theta D(\theta):= \EE\left[\sum_{t} \KL \left(p_\theta(\cdot \mid s_t)\,\|\, p^*(\cdot \mid s_t)\right)\right],
    \label{eq:plain_opd}
\end{equation}
where 
\[
\KL \left(p_\theta(\cdot \mid s)\,\|\, p^*(\cdot \mid s)\right)=\sum_v p_\theta(v\mid s)\log \frac{p_\theta (v\mid s)}{p^*(v\mid s)},
\]
and the expectation is taken over the prompts and responses generated by the student policy. Following prior works \citep{agarwal2024onpolicydistillationlanguagemodels,shao2026tokenlevelanalysissampledtokenreversekl}, the generated contexts are treated as fixed during each student update and are refreshed as the student changes.

As shown in \eqref{eq:plain_opd}, vanilla OPD assigns equal weights to teacher signals at every token. However, not all teacher signals are equally informative \citep{armandpour2026unmasking,xie2026positionbias}. 
We introduce a weight function $w(s,v)> 0$, which controls the importance of matching the teacher's probability of token $v$ at context $s$. For a fixed context $s$, we use $w(s,\cdot)$ to denote the corresponding vector of token weights. We then define a weighted KL divergence that generalizes the ordinary KL divergence.

\begin{definition}[Weighted KL divergence]
    \label{def:weighted_kl}
    At a context $s$, given the positive weights $w(s,\cdot)$, we define 
    \begin{align} 
    &\KL_{w(s,\cdot)} \!\left( p_\theta(\cdot\mid s) \,\|\, p^*(\cdot\mid s) \right) \nonumber\\ &\qquad:= \sum_v w(s,v) \left[ p_\theta(v\mid s) \log \frac{p_\theta(v\mid s)} {p^*(v\mid s)} - p_\theta(v\mid s) + p^*(v\mid s) \right]. \label{eq:weighted_kl} 
    \end{align}
\end{definition}
where the additional term $- p_\theta(v\mid s) + p^*(v\mid s)$ ensures nonnegativity under token-dependent weighting, following the generalized KL divergence (also known as I-divergence) with nonnegative measures~\citep{csiszar1975divergence,banerjee2005clustering}.
This weighted KL divergence equals zero if and only if $p_\theta(v\mid s)=p^*(v\mid s)$ for every token $v$. 
Moreover, when $w(s,v)\equiv 1$ for all $v$, it holds $\sum_v w(s,v) \bigl(p^*(v\mid s)-p_\theta(v\mid s) \bigr)=0$, so \eqref{eq:weighted_kl} reduces to the ordinary KL divergence. 

With the weighted KL divergence, we define the weighted OPD as follows.
\begin{definition}[Weighted OPD]
    Given a weight function $w(s,\cdot)> 0$ for every context $s$, we define weighted OPD by
    \begin{equation} 
    \min_\theta D_w(\theta) := \mathbb{E}\!\left[ \sum_{t} \KL_{w(s_t,\cdot)} \!\left( p_\theta(\cdot\mid s_t) \,\|\, p^*(\cdot\mid s_t) \right) \right].
    \label{eq:weighted_opd} \end{equation} 
\end{definition}
We next show that this remains a valid distillation objective. In particular, minimizing $D_w(\theta)$, for any positive $w$, still pushes the student to match the teacher.

\begin{theorem}[Validity of weighted OPD]
    \label{thm:weighted_opd_validity}
    For any positive weight function $w$, we have
    \[
    D_w(\theta)\geq 0.
    \]
    Equality holds if and only if $p_\theta(v\mid s) = p^*(v\mid s)$ for every token $v$ at almost every context under the reference context distribution. When $w(s,v)\equiv 1$, weighted OPD in \eqref{eq:weighted_opd} reduces to vanilla OPD.
\end{theorem}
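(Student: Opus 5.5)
The plan is to reduce everything to a pointwise inequality on each summand of the weighted KL divergence in Definition~\ref{def:weighted_kl}, and then lift it to $D_w(\theta)$ by linearity and monotonicity of expectation. For a fixed context $s$ and token $v$, write $a=p_\theta(v\mid s)$ and $b=p^*(v\mid s)$, and consider the scalar function
\[
\phi(a,b)=a\log\frac{a}{b}-a+b .
\]
I will show that $\phi(a,b)\ge 0$, with equality if and only if $a=b$. Here $b>0$ is assumed, as is implicit in OPD since the teacher has full support through a softmax, and $0\log 0=0$ by convention.

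To prove this, fix $b>0$ and view $\phi(\cdot,b)$ as a function on $[0,\infty)$. Its derivative is $\log(a/b)$ and its second derivative is $1/a>0$, so $\phi(\cdot,b)$ is strictly convex. Its unique stationary point is $a=b$, where $\phi=0$. Equivalently, we can use $\log u\le u-1$ with $u=b/a$, which gives $a\log(b/a)\le b-a$, with equality iff $a=b$. The boundary case $a=0$ gives $\phi=b>0$.

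Because $w(s,v)>0$, each weighted term $w(s,v)\phi\bigl(p_\theta(v\mid s),p^*(v\mid s)\bigr)$ is nonnegative and vanishes iff the two probabilities agree. Summing over $v$ shows that $\KL_{w(s,\cdot)}\ge 0$, with equality iff $p_\theta(\cdot\mid s)=p^*(\cdot\mid s)$ coordinatewise. Summing over $t$ and taking the expectation over prompts and student-generated responses then gives $D_w(\theta)\ge 0$. Here the contexts are treated as fixed under the reference context distribution, as stated after \eqref{eq:plain_opd}.

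For the equality case, I will use the standard fact that a nonnegative random variable has zero expectation iff it is zero almost surely. So $D_w(\theta)=0$ iff $\sum_t \KL_{w(s_t,\cdot)}=0$ a.s. This holds iff every summand vanishes at almost every visited context, which by the pointwise step holds iff $p_\theta(v\mid s)=p^*(v\mid s)$ for all $v$ at almost every context under the reference distribution. The converse direction is immediate, since each term is then identically zero.

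For the reduction, set $w\equiv 1$. Then $\sum_v\bigl(p^*(v\mid s)-p_\theta(v\mid s)\bigr)=1-1=0$, so \eqref{eq:weighted_kl} equals the ordinary KL divergence, and hence $D_w=D$ in \eqref{eq:plain_opd}.

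The main obstacle is being careful in the measure-theoretic equality step. We must make precise that ``almost every context'' refers to the distribution over contexts $s_t$ visited by the student's rollouts, aggregated over $t$. We must also handle possibly unbounded or random response lengths; I would do this with a Tonelli argument on the nonnegative summands, which justifies exchanging the sum and the expectation. Zero-probability tokens and the $0\log 0$ convention also need attention, but they are minor.
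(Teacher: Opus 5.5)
Your proposal is correct and follows essentially the same route as the paper. The paper writes each summand as $d(a,b)=b\,\phi(a/b)$ with $\phi(u)=u\log u-u+1$ and uses $\phi''(u)=1/u>0$ to get nonnegativity, with equality iff $a=b$. It then sums with the positive weights, takes the reference expectation to get the almost-everywhere equality claim, and notes that the correction terms cancel when $w\equiv1$. Your direct convexity argument in $a$ (or the $\log u\le u-1$ variant) is the same argument in slightly different coordinates, and your Tonelli and $a=0$ remarks add care the paper skips, since it simply assumes strictly positive probabilities.
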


\section{OPD Done Right: Find the Best Weight for OPD}
\label{sec:dr_opd}
We now study how to choose the weight function $w$.
Recall that the ultimate goal of distillation is to improve the student's task performance, while minimizing the divergence to a stronger teacher is merely a means to achieve this goal. This principle motivates us to select the weight function according to the performance of the resulting student, formulating a bilevel optimization problem.

\subsection{Formulation of Dr. OPD}

We evaluate the student's performance using an outcome reward $V(x,y)\in[0,1]$ for each prompt--response pair $(x,y)$. The student's expected reward is then
\begin{equation*}
    R(\theta) := \mathbb{E}_{x,\;y\sim p_\theta(\cdot\mid x)}\bigl[V(x,y)\bigr].
\end{equation*}
We now introduce \textbf{DR. OPD} \emph{(OPD done right)}, the optimal weighted OPD to maximize the student's performance:
\begin{equation}
{
    \max_{w_{\min}\leq w\leq w_{\max}}
    R\!\left(\theta^*(w)\right)
    \qquad
    \text{subject to}
    \qquad
    \theta^*(w) \in\arg\min_\theta D_w(\theta).
}
    \label{eq:bilevel}
\end{equation}
Dr. OPD is a bilevel optimization problem, where the upper level chooses the weight function $w$ to maximize the resulting student's reward, and the lower level trains the student $\theta$ by minimizing $D_w(\theta)$, under the given $w$.
Here, the constraint $w\in [w_{\min}, w_{\max}]$ means pointwise that $w(s,v)\in[w_{\min},w_{\max}]$ for every $(s,v)$, where $0<w_{\min}<1<w_{\max}$ are prespecified bounds. 
For simplicity, we write $\theta^*(w)$ as if the lower-level solution were unique. When multiple lower-level solutions exist, we choose the one that maximizes the upper-level reward; see the formal definition in Appendix~\ref{app:bilevel}.

\subsection{Iterative Solver for Dr. OPD}
\label{sec:local_weighting}

Solving the bilevel Dr. OPD problem in \eqref{eq:bilevel} exactly is computationally intractable, since evaluating each candidate weight function $w$ would require fully optimizing the lower-level student and then evaluating its reward. In addition, the weight function $w(s,v)$ is defined over all possible context--token pairs, making the upper-level optimization infinite-dimensional.

We therefore design an iterative solver that alternates between student and weight updates. At each round, starting from the current student $\bar\theta$, we take one gradient update of weighted OPD with the weights fixed. We then update the weights according to the reward of the resulting updated student, restricting the weights update to the sampled tokens in responses. This reduces the original infinite-dimensional functional optimization into a finite-dimensional problem at each step.

\paragraph{Lower-level update.}
For a fixed weight function $w$, we take one gradient descent step from the current student $\bar{\theta}$. We generate a response $y=(y_1,\ldots,y_T)$ from $p_{\bar{\theta}}$ and hold the sampled trajectory fixed during this round. Here, $y_t$ is the realized token at step $t$, whereas the previous $v\in\mathcal{V}$ in \eqref{eq:plain_opd} indexes all possible tokens. We write
\[
    w_t := w(s_t,y_t),
    \qquad
    p_t(\bar\theta):=p_{\bar\theta}(y_t\mid s_t),
    \qquad
    p_t^*:=p^*(y_t\mid s_t).
\]

Differentiating the weighted OPD objective at $\bar{\theta}$ gives
\begin{equation}
    -\nabla_\theta D_w(\bar{\theta})
    =
    \mathbb{E}\!\left[
        \sum_t w_t g_t
    \right],\quad \textrm{where}\quad 
    g_t
    :=
    \log\frac{p_t^*}{p_t(\bar{\theta})}
    \left.
    \nabla_\theta \log p_\theta(y_t\mid s_t)
    \right|_{\theta=\bar{\theta}}.
    \label{eq:weighted_opd_gradient}
\end{equation}
The derivation is provided in Appendix~\ref{app:weighted_gradient}.
The corresponding one-step update on the lower-level solution is therefore
\begin{equation}
    \theta^+(w)
    :=\bar\theta - \eta\nabla D_w(\bar\theta)=
    \bar{\theta}
    +
    \eta\,
    \mathbb{E}\!\left[
        \sum_t w_t g_t
    \right],
    \label{eq:one_step_update}
\end{equation}
where $\eta>0$ is the student step size. 
The expectation in \eqref{eq:weighted_opd_gradient}--\eqref{eq:one_step_update} is taken over the current policy $p_{\bar\theta}$.

We call $g_t$ in \eqref{eq:weighted_opd_gradient} the teacher correction at token $y_t$, which is interpreted as follows. If the teacher assigns token $y_t$ a higher probability than the student, then $g_t$ increases the student's probability of generating $y_t$. On the other hand, if the teacher assigns it a lower probability, then $g_t$ decreases its probability. The weight $w_t$ controls the strength of this teacher correction to the student update.

\paragraph{Upper-level update.}
We next update the weight function $w$ according to the reward of the updated student $\theta^+(w)$. 
Since the weight function is defined over all possible context–token pairs and we do not impose a parametric form on it, a gradient update like in \eqref{eq:one_step_update} does not apply here. Instead, we construct a tractable lower bound approximation on $R(\theta^+(w))$ and optimize this bound over the weight function. As shown below, this leads to a simple pointwise update for the weights.

We denote the one-step vanilla OPD update from the current student $\bar\theta$ as follows, which sets $w\equiv 1$,
\[
    \theta_{\mathrm{vani}}^+
    :=
    \theta^+(1).
\]
To determine how the weights should be updated, we ask how changing the weight $w_t$ of the teacher signal at token $t$ affects the resulting student's reward. We define the \textbf{credit} at token $t$ as
\begin{equation}
    c_t
    :=
    \frac{1}{\eta}
    \left.
    \frac{\partial R\!\left(\theta^+(w)\right)}
         {\partial w_t}
    \right|_{w\equiv 1}
    =
    g_t^\top
    \nabla R\!\left(\theta_{\mathrm{vani}}^+\right),
    \label{eq:reward_credit}
\end{equation}
where the second equality follows from the chain rule and \eqref{eq:one_step_update}. The derivative is understood per unit reference measure; Appendix~\ref{app:credit} gives its precise functional interpretation.

The credit $c_t$ measures whether following the teacher more strongly at token $t$ would locally improve the student's reward. Recall that the teacher correction $g_t$ in \eqref{eq:weighted_opd_gradient} locally moves the sampled token's probability toward the teacher's. However, moving closer to the teacher does not necessarily improve the student's task performance. The credit $c_t$ captures this distinction. If $c_t>0$, placing more weight on the teacher correction is beneficial to the student's reward, suggesting that we should trust this signal more and choose $w_t>1$. On the other hand, if $c_t<0$, following the teacher more strongly is harmful, suggesting that the signal should be downweighted with $w_t<1$. Thus, $g_t$ determines the direction of the token-level teacher correction, while $c_t$ determines how strongly the student should follow the teacher signal.

We next derive a lower bound surrogate on the reward $R(\theta^+(w))$.
\begin{proposition}[Lower bound on reward]
    \label{prop:reward_lower_bound}
    Suppose that the reward gradient $\nabla R(\theta)$ is locally $L_R$-Lipschitz in $\theta$ and that the regularity conditions in Appendix~\ref{app:reward_bound} hold. 
    Then, for any weight function
    $w\in[w_{\min},w_{\max}]$ and any $\lambda>0$ satisfying $\lambda \eta L_R
        \mathbb{E}\!\left[
            \sum_t \|g_t\|^2
        \right]
        \leq 1.$
    \begin{equation}
        R\!\left(\theta^+(w)\right)
        \geq
        R\!\left(\theta_{\mathrm{vani}}^+\right) +
        \eta\,
        \mathbb{E}\!\left[
            \sum_t
            \left(
                (w_t-1)c_t
                -
                \frac{(w_t-1)^2}{2\lambda}
            \right)
        \right].
        \label{eq:reward_lower_bound}
    \end{equation}
\end{proposition}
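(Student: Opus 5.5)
Since $\theta^+(w)$ is affine in $w$, we can write $\theta^+(w)-\theta^+_{\mathrm{vani}}$ explicitly and combine the descent lemma for $R$ with Cauchy--Schwarz to control the quadratic term. From \eqref{eq:one_step_update},
\[
\Delta(w):=\theta^+(w)-\theta_{\mathrm{vani}}^+=\eta\,\mathbb{E}\!\left[\sum_t (w_t-1)g_t\right],
\]
because both updates start from $\bar\theta$ and use the same sampled contexts, which are held fixed in the round. The standard quadratic lower bound for functions with $L_R$-Lipschitz gradient, applied along the segment between $\theta^+_{\mathrm{vani}}$ and $\theta^+(w)$, gives
\[
R(\theta^+(w))\ge R(\theta^+_{\mathrm{vani}})+\nabla R(\theta^+_{\mathrm{vani}})^\top\Delta(w)-\frac{L_R}{2}\|\Delta(w)\|^2.
\]
Here we need the regularity conditions in Appendix~\ref{app:reward_bound} to guarantee that this segment stays in the region where $\nabla R$ is $L_R$-Lipschitz. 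Since $w\in[w_{\min},w_{\max}]$ and $\eta$ is a step size, $\|\Delta(w)\|\le \eta\max(w_{\max}-1,1-w_{\min})\,\mathbb{E}[\sum_t\|g_t\|]$ stays bounded, so a local neighbourhood assumption suffices.

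For the linear term, move the inner product inside the expectation by linearity (assuming integrability, again from the regularity conditions). Using the definition \eqref{eq:reward_credit} of $c_t=g_t^\top\nabla R(\theta^+_{\mathrm{vani}})$, this gives exactly
\[
\nabla R(\theta^+_{\mathrm{vani}})^\top\Delta(w)=\eta\,\mathbb{E}\!\left[\sum_t (w_t-1)c_t\right].
\]

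For the quadratic term, apply Cauchy--Schwarz jointly over the expectation and the sum over $t$ (Jensen for $\|\mathbb{E}[\cdot]\|$, then Cauchy--Schwarz in the product measure):
\[
\|\Delta(w)\|^2\le \eta^2\,\mathbb{E}\!\left[\sum_t (w_t-1)^2\right]\mathbb{E}\!\left[\sum_t\|g_t\|^2\right].
\]
Then $\frac{L_R}{2}\|\Delta(w)\|^2\le \frac{\eta}{2}\cdot\eta L_R\,\mathbb{E}[\sum_t\|g_t\|^2]\cdot\mathbb{E}[\sum_t(w_t-1)^2]$. The hypothesis $\lambda\eta L_R\,\mathbb{E}[\sum_t\|g_t\|^2]\le 1$ bounds the middle factor by $1/\lambda$, so
\[
\frac{L_R}{2}\|\Delta(w)\|^2\le \eta\,\mathbb{E}\!\left[\sum_t\frac{(w_t-1)^2}{2\lambda}\right].
\]
Substituting the linear and quadratic estimates into the descent-lemma bound yields \eqref{eq:reward_lower_bound}.

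The algebra is routine once $\Delta(w)$ is written down. The main obstacle is the Cauchy--Schwarz step over a random number of tokens $T$: the sum $\sum_t$ inside $\mathbb{E}$ must be treated as an integral against the product measure (response distribution times counting measure on positions), and finiteness of $\mathbb{E}[\sum_t\|g_t\|^2]$ and $\mathbb{E}[T]$ must be taken from the regularity conditions. A secondary point is that the Lipschitz assumption is only local, which I would handle with the boundedness of $\|\Delta(w)\|$ noted above.
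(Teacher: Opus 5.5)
Your proposal is correct and matches the paper's proof step for step. Like the paper, you write $\Delta(w)=\eta\int (w-1)g\,\mathrm d\mu$, apply the $L_R$-smoothness lower bound along the segment, identify the linear term with $\eta\,\mathbb{E}[\sum_t (w_t-1)c_t]$, and bound $\|\Delta(w)\|^2\le\eta^2 M\|w-1\|_\mu^2$ by Cauchy--Schwarz in $L^2(\mu)$ before invoking $\lambda\eta L_R M\le 1$. The one difference is how locality is handled. The paper directly assumes an open convex set $U$ containing $\bar\theta$ and every $\theta^+(w)$, so the segment lies in $U$. Your appeal to the boundedness of $\|\Delta(w)\|$ would not, on its own, place the segment in the Lipschitz region.
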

The right-hand side provides a tractable lower-bound surrogate for $R(\theta^+(w))$. The parameter $\lambda>0$ is introduced to simplify the lower bound; see Appendix~\ref{app:reward_bound} for details.

Maximizing this surrogate  with respect to $w$ yields a closed-form update
\begin{equation}
    w_t^+
    =
    \operatorname{clip}
    \left(
        1+\lambda c_t,\,
        w_{\min},\,
        w_{\max}
    \right).
    \label{eq:weight_update}
\end{equation}
The update implies that $\lambda$ can also be interpreted as the hyperparameter to control the weighting strength, with larger values allowing stronger adjustments to token-level weights.
Importantly, this closed-form update removes the need to parameterize or train a separate model for the weight function. Instead, we directly compute the weight $w_t^+$ for each sampled token, which substantially reduces the computational cost. 

Note that the updated $w^+$ attains a higher reward than vanilla OPD, as long as $w_t^+\not\equiv 1$.
\begin{theorem}[Reward improvement over vanilla OPD]
    \label{thm:reward_improvement}
    Under the conditions ensuring \eqref{eq:reward_lower_bound}, 
    \begin{equation}
        R\!\left(\theta^+(w^+)\right)
        -
        R\!\left(\theta_{\mathrm{vani}}^+\right)
        \geq
        \frac{\eta}{2\lambda}
        \mathbb{E}\!\left[
            \sum_t
            (w_t^+-1)^2
        \right]
        \geq 0.
        \label{eq:reward_improvement}
    \end{equation}
\end{theorem}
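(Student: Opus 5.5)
The plan is to apply Proposition~\ref{prop:reward_lower_bound} with the particular weight $w=w^+$, and then bound the surrogate's integrand from below pointwise. First, $w^+$ is admissible: by \eqref{eq:weight_update}, $w_t^+\in[w_{\min},w_{\max}]$ for every sampled token, and the weights at unsampled context--token pairs can be kept at $1\in[w_{\min},w_{\max}]$. The step-size condition $\lambda\eta L_R\,\mathbb{E}[\sum_t\|g_t\|^2]\le 1$ does not involve $w$, so it holds by assumption. The Proposition therefore gives
\[
R\bigl(\theta^+(w^+)\bigr)-R\bigl(\theta_{\mathrm{vani}}^+\bigr)\;\ge\;\eta\,\mathbb{E}\Bigl[\sum_t \phi_t(w_t^+)\Bigr],\qquad \phi_t(u):=(u-1)c_t-\frac{(u-1)^2}{2\lambda}.
\]

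It then suffices to show, for each token, the scalar inequality
\[
\phi_t(w_t^+)\;\ge\;\frac{(w_t^+-1)^2}{2\lambda},
\qquad\text{equivalently}\qquad (w_t^+-1)\,c_t\;\ge\;\frac{(w_t^+-1)^2}{\lambda}.
\]
Write $d_t:=w_t^+-1$ and $u_t:=\lambda c_t$, so that $d_t=\operatorname{clip}(u_t,\,w_{\min}-1,\,w_{\max}-1)$, where the interval contains $0$ because $w_{\min}<1<w_{\max}$. The target is $d_t u_t\ge d_t^2$, and I would verify it case by case:
\begin{itemize}
\item \emph{Unclipped case:} $d_t=u_t$, so the two sides are equal.
\item \emph{Clipped at the upper bound:} $u_t>w_{\max}-1=d_t>0$, so $d_tu_t\ge d_t^2$.
\item \emph{Clipped at the lower bound:} $u_t<w_{\min}-1=d_t<0$; multiplying $u_t<d_t$ by the negative number $d_t$ gives $d_tu_t>d_t^2$.
\end{itemize}
This is just the projection inequality $\langle u-\Pi(u),\,0-\Pi(u)\rangle\le 0$ for projection onto an interval containing $0$.

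Summing over $t$ and taking expectations gives the first inequality in \eqref{eq:reward_improvement}. The second inequality is immediate because the right-hand side is an expectation of squares.

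The only delicate point, which I do not expect to be a real obstacle, is bookkeeping. The Proposition must be applied to a weight function defined on all $(s,v)$ pairs. Setting $w=1$ off the sampled tokens means those tokens contribute $0$ to both sides. I must also make sure the expectation in the surrogate is over the same fixed trajectories used to define $w^+$ and $c_t$.
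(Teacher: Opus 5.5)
Your argument is correct and is essentially the paper's proof: the paper also applies Proposition~\ref{prop:reward_lower_bound} at $w=w^+$, and it obtains the pointwise inequality $h^+c\ge (h^+)^2/\lambda$ (with $h^+=w^+-1$) from the first-order optimality condition of the clipped maximizer tested against the feasible point $w=1$, which is exactly the projection inequality your case analysis verifies. The only difference is bookkeeping: the paper defines $w^+=\operatorname{clip}(1+\lambda c,w_{\min},w_{\max})$ on all context--token pairs via $c(s,v)=g(s,v)^\top\nabla R(\theta_{\mathrm{vani}}^+)$, and your pointwise inequality holds at every pair, so no reset to $1$ off the sampled tokens is needed.
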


\section{Algorithm}
\label{sec:methodology}

In this section, we propose a practical algorithm to solve Dr. OPD.
At each round, starting from the current student $\bar\theta$, 
we sample multiple responses per prompt and obtain the teacher signals and outcome-rewards. Such outcome rewards are readily available for verifiable tasks such as math and code through answer checking and test execution \citep{guo2025deepseekr1}, and can also be provided by learned reward models for more general tasks \citep{ouyang2022training}.
Given the signals from both the teacher and the outcome reward, we estimate the token credits $c_t$ in \eqref{eq:reward_credit} and update the weights as in \eqref{eq:weight_update}.
Lastly, we update the student with one gradient step on the resulting weighted OPD.

\paragraph{Computation of credits. } Recall that $c_t = g_t^\top \nabla R(\theta_{\mathrm{vani}}^+),$ where the teacher correction $g_t$ is directly computed as in \revision{\eqref{eq:weighted_opd_gradient}} but estimating $\nabla R(\theta_{\mathrm{vani}}^+)$ can be computationally expensive. In particular, estimating $\nabla R(\theta_{\mathrm{vani}}^+)$ would require first applying a vanilla OPD update from the current $\bar\theta$ to obtain $\theta_{\mathrm{vani}}^+$, then generating a new set of responses from this updated student and using them to estimate its  gradient of reward. This would introduce an additional round of model update and response generation at every training iteration, substantially increasing
the computational cost.

To avoid the cost, we approximate $\nabla R(\theta_{\mathrm{vani}}^+)$ by  $\nabla R(\bar\theta)$ evaluated at the current student
$\bar\theta$, which can be estimated using the responses already sampled.
For a general outcome reward function, $\nabla R(\bar\theta)$ can be estimated by weighting each response's log-probability gradient by its reward.
For the verifiable math and code tasks, we can follow GRPO \citep{shao2024deepseekmathpushinglimitsmathematical} to estimate $\nabla R(\bar\theta)$ by
\begin{equation*}
    \widehat r:=\frac{1}{|\mathcal B|}\sum_{(x,y)\in\mathcal B}
    A(x,y)\sum_t\left.\nabla_\theta\log p_\theta(y_t\mid s_t)\right|_{\theta=\bar\theta},
\end{equation*}
where $A(x,y)$ denotes the group-relative advantage with the explicit expression in Appendix \ref{app:method_implementation}, and $\mathcal{B}$ denotes the batch of prompt-response pairs.
We then approximate the token credit by $ g_t^\top \widehat r$.

\paragraph{Efficient computation with Jacobian--vector product (JVP). }
For a response with $T$ tokens, we shall compute credits
$\widehat c_t=g_t^\top \widehat r$ for each token $t=1,\ldots,T$.
A naive implementation would require computing $T$  parameter gradients $g_1,\ldots,g_T \in \mathbb{R}^{|\bar\theta|},$ one for each token. Since the number of model parameters
$|\bar\theta|$ is very large, explicitly forming all $\{g_t\}_{t=1}^T$ is prohibitively expensive.

It turns out that we do not need to explicitly compute or store the parameter gradient $g_t$. Recall that
\begin{equation}
    g_t
    =
    d_t
    \left.
    \nabla_\theta \log p_\theta(y_t\mid s_t)
    \right|_{\theta=\bar\theta},
    \qquad
    d_t
    :=
    \log
    \frac{p^*(y_t\mid s_t)}
         {p_{\bar\theta}(y_t\mid s_t)}
    \in\mathbb{R}.
    \label{eq:practical_teacher_correction}
\end{equation}
Substituting this expression into
$\widehat c_t=g_t^\top\widehat r$ gives, for each token $t=1,\dots,T$,
\begin{equation}
    \widehat c_t
    =
    d_t
    \left\langle
    \left.
    \nabla_\theta \log p_\theta(y_t\mid s_t)
    \right|_{\theta=\bar\theta},
    \widehat r
    \right\rangle
    =
    d_t
    \left.
    \frac{\mathrm d}{\mathrm d\epsilon}
    \log p_{\bar\theta+\epsilon\widehat r}(y_t\mid s_t)
    \right|_{\epsilon=0},
    \label{eq:practical_jvp_credit}
\end{equation}
where the second equality follows from the directional-derivative identity. Thus, instead of explicitly constructing the $T$ parameter-dimensional gradients $\{g_t\}_{t=1}^T$, we only need the $T$ scalar directional derivatives $\left\{\left.\frac{\mathrm d}{\mathrm d\epsilon}\log p_{\bar\theta+\epsilon\widehat r}(y_t\mid s_t)\right|_{\epsilon=0}\right\}_{t=1}^T$, which can be computed altogether with a single JVP.

\paragraph{Updating weights.}
Following the closed-form update in \eqref{eq:weight_update}, we update the token weights:
\begin{equation}
    \widehat w_t
    =
    \clip\!\left(
        1
        +
        \lambda
        \frac{\widehat c_t}
             {\max(\sigma,\epsilon)},
        \,
        w_{\min},
        \,
        w_{\max}
    \right),
    \label{eq:practical_weight_update}
\end{equation}
where $\epsilon>0$ prevents division by zero, 
and $\sigma$ is the root-mean-squared (RMS) scale of the estimated credits, whose explicit expression is provided in Appendix~\ref{app:practical_details}. This RMS normalization helps stabilize the weight updates.

We then perform one gradient step on the $\{\widehat w_t\}$-weighted OPD, as in \eqref{eq:one_step_update}, but with the population expectation replaced by an empirical average. The complete procedure is in Algorithm~\ref{alg:dr_opd}.
Compared with vanilla OPD, the only additional steps are computing the gradient of the outcome rewards, the token credits, and the corresponding weights.
Since the credits can be computed with a single JVP and the weights are obtained in closed form, these computations add little overhead.

\begin{algorithm}[H]
\caption{Solver for DR-OPD}
\label{alg:dr_opd}
\begin{algorithmic}[1]
\Require Student $p_\theta$, teacher $p^*$, outcome reward function $V$, weighting strength $\lambda$, bounds $w_{\min},w_{\max}$
\State Set current student policy $\bar\theta \gets \theta$
\For{each training round}
    \State Sample multiple responses per prompt from $\bar\theta$.
    \State Evaluate responses' outcome reward and compute the estimated gradient of reward $\widehat r$.
    \State Compute the token discrepancies $d_t$ in \eqref{eq:practical_teacher_correction}.
    \State Compute credits $\widehat c_t$ by JVP using \eqref{eq:practical_jvp_credit}.
    \State Update weights $\widehat w_t$ using \eqref{eq:practical_weight_update}.
    \State Update $\bar\theta$ by gradient update of $\{\widehat w_t\}$-weighted OPD in \eqref{eq:one_step_update}.
\EndFor
\end{algorithmic}
\end{algorithm}

\section{Experimental Results}
\label{sec:experiments}

We evaluate Dr. OPD across four student--teacher configurations:
\[
(\text{Base},\text{Instruct}),\quad
(\text{Instruct},\text{Instruct}),\quad
(\text{Base},\text{RL}),\quad
(\text{Instruct},\text{RL}),
\]
where the first entry denotes the type of student model and the second denotes the type of teacher.
These terms refer to different stages of post-training.
A \emph{Base} model is the pretrained model before instruction tuning, while an
\emph{Instruct} model has been post-trained to better follow user instructions and solve downstream tasks.
An \emph{RL} model is obtained through further reinforcement learning using outcome rewards and represents a more strongly post-trained model.

These configurations form two complementary distillation regimes. In \emph{strong-to-weak} regime, a larger Instruct model serves as the teacher for either a Base or an Instruct student. In \emph{same-size} regime, we use an RL-trained teacher and a base or Instruct model of the same size as the student. 

\subsection{Experimental setup}
\label{sec:exp-setup}
\textbf{Models.}
We instantiate the four student--teacher configurations above using Qwen3 models, where models without the ``Base'' or ``RL'' suffix refer to their Instruct versions.
For strong-to-weak distillation, we use Qwen3-4B~\citep{yang2025qwen3technicalreport} as the teacher and consider two students, Qwen3-1.7B-Base and Qwen3-1.7B.
For same-size distillation, we consider Qwen3-4B-Base with an RL-trained Qwen3-4B-Base teacher, and Qwen3-4B with an RL-trained Qwen3-4B teacher \citep{yang2026learningteachergeneralizedonpolicy}.
All teachers are frozen during distillation.

\textbf{Training data and protocol.}
For mathematical reasoning, we train on DeepMath-103K~\citep{he2025deepmath103klargescalechallengingdecontaminated}. For code generation, we train on Eurus-RL-Code~\citep{cui2025processreinforcementimplicitrewards}. Following prior OPD work~\citep{yang2026learningteachergeneralizedonpolicy,sun2026reopdreliabilityadaptiverewardextrapolation}, we use a small number of post-training steps: 50 steps in all settings.
Full training details are provided in Appendix~\ref{app:hparams}.

\textbf{Evaluation.}
For math, we evaluate on AIME24, AIME25, AMC, Minerva Math~\citep{lewkowycz2022solvingquantitativereasoningproblems}, and OlympiadBench~\citep{he2024olympiadbenchchallengingbenchmarkpromoting}, and report accuracy averaged over 16 sampled responses per problem (avg@16).
For code, we evaluate on HumanEval+ and MBPP+ from EvalPlus~\citep{liu2023codegeneratedchatgptreally}, together with LiveCodeBench~\citep{jain2024livecodebenchholisticcontaminationfree}, and report avg@4.
We evaluate checkpoints every 10 optimization steps on the evaluated datasets and report the best checkpoint for each method, as adopted in works ~\citep{ding2026doesonpolicydistillationreally,zhao2026selfdistilledreasoneronpolicyselfdistillation}. 

\textbf{Baselines.}
We compare Dr. OPD with four OPD-based baselines.
\textit{OPD}~\citep{gu2026minillmonpolicydistillationlarge} is the standard on-policy distillation baseline with uniform token weighting.
\textit{ExOPD}~\citep{yang2026learningteachergeneralizedonpolicy} extrapolates the teacher's distillation signal with the goal of moving the student beyond the teacher.
With outcome reward signals, 
\textit{GRPD}~\citep{lin2026onpolicydistillationverifiablereward} considers reward-gated distillation, while \textit{OPD+GRPO}~\citep{coreteam2026mimov2flashtechnicalreport} adds an additional GRPO objective alongside the OPD objective.
Implementation details are provided in Appendix~\ref{app:hparams}.

\subsection{Distillation results}
\label{sec:exp-results}

\paragraph{Strong-to-weak distillation.}
We distill the Qwen3-4B teacher into the Qwen3-1.7B and Qwen3-1.7B-Base students. As shown in Table~\ref{tab:s2w}, Dr. OPD achieves the best average performance on both math and code for both students. Notably, vanilla OPD still leaves a substantial gap between the student and the stronger teacher, suggesting that simply mimicking a stronger, larger teacher is insufficient to bring the student close to the teacher's performance.

\textbf{A smaller student can outperform its larger teacher.}
For the 1.7B student, Dr. OPD even surpasses the larger 4B teacher in average math performance (35.0 vs.\ 34.1). This improvement reflects the design of Dr. OPD, which does not merely mimic the teacher but also uses the student's outcome reward to determine which teacher signals to follow more closely. However, simply adding outcome rewards is not sufficient, since both OPD+GRPO and GRPD also use such rewards but yield only marginal improvements. This suggests that how the outcome reward is incorporated into distillation is important.

For the 1.7B-Base student, Dr. OPD still improves over vanilla OPD by 1.8 points on math, while the gain is much larger on code, reaching 6.4 points. Compared with the 1.7B student, the 1.7B-Base student shows a smaller gain on math. We attribute this mainly to its much lower starting accuracy. Since the Base student produces fewer successful rollouts, the outcome-reward signal used to estimate token credits is less informative for identifying useful teacher supervision. Despite this, Dr. OPD still achieves the best average performance among all compared methods.

\begin{table}[!ht]
\centering
\footnotesize
\setlength{\tabcolsep}{3.4pt}
\renewcommand{\arraystretch}{1.08}

\begin{adjustbox}{max width=0.95\linewidth}
\begin{tabular}{@{}lcccccc@{\hspace{0.9em}}cccc@{}}
\toprule
& \multicolumn{6}{c}{\textbf{Math}}
& \multicolumn{4}{c}{\textbf{Code}} \\
\cmidrule(lr){2-7}
\cmidrule(lr){8-11}
Method
& AIME24 & AIME25 & AMC & Minerva & Olym. & \textbf{Avg.}
& HE+ & MBPP+ & LCB & \textbf{Avg.} \\
\midrule

Teacher (Qwen3-4B)
& 22.5 & 16.9 & 60.5 & 27.1 & 43.7 & 34.1
& 77.1 & 64.8 & 16.1 & 52.7 \\

\midrule
\multicolumn{11}{@{}l}{\textit{Student: Qwen3-1.7B}} \\
\addlinespace[1pt]

Initial
& 12.3 & 9.0 & 39.7 & 19.1 & 32.9 & 22.6
& 59.8 & 52.5 & 11.9 & 41.4 \\

\noalign{\vskip 1.5pt}
\cdashline{1-11}
\noalign{\vskip 1.5pt}

OPD
& 15.4 & 12.5 & 43.7 & 19.4 & 35.8 & 25.3
& 60.4 & 54.9 & 15.0 & 43.4 \\

ExOPD
& 17.5 & 10.0 & 44.0 & 20.2 & 36.6 & 25.7
& 61.1 & 56.5 & 15.4 & 44.3 \\

OPD+GRPO
& 16.0 & 13.1 & 42.9 & 21.0 & 35.1 & 25.6
& 60.7 & 55.4 & 14.9 & 43.7 \\

GRPD
& 15.8 & 9.2 & 43.5 & 19.4 & 35.4 & 24.7
& 64.0 & 52.0 & 14.9 & 43.6 \\

\rowcolor{dropdrow}
\textbf{Dr. OPD}
& \textbf{25.6}\gain{10.2}
& \textbf{23.5}\gain{11.0}
& \textbf{56.9}\gain{13.2}
& \textbf{24.6}\gain{5.2}
& \textbf{44.4}\gain{8.6}
& \textbf{35.0}\gain{9.7}
& \textbf{64.2}\gain{3.8}
& \textbf{60.7}\gain{5.8}
& \textbf{20.4}\gain{5.4}
& \textbf{48.4}\gain{5.0} \\

\midrule
\multicolumn{11}{@{}l}{\textit{Student: Qwen3-1.7B-Base}} \\
\addlinespace[1pt]

Initial
& 1.7 & 1.5 & 10.8 & 4.4 & 7.8 & 5.2
& 5.5 & 4.2 & 1.3 & 3.7 \\

\noalign{\vskip 1.5pt}
\cdashline{1-11}
\noalign{\vskip 1.5pt}

OPD
& 6.9 & 5.0 & 28.8 & 20.1 & 23.3 & 16.8
& 54.0 & 55.0 & 11.7 & 40.2 \\

ExOPD
& 6.7 & \textbf{5.6} & 27.5 & 20.6 & 23.8 & 16.8
& 61.1 & 54.7 & 14.3 & 43.4 \\

OPD+GRPO
& 8.1 & 5.2 & 29.3 & 21.0 & 23.7 & 17.5
& 63.1 & 55.1 & 15.0 & 44.4 \\

GRPD
& 6.9 & \textbf{5.6} & 28.8 & 20.0 & 23.0 & 16.9
& 59.3 & 50.3 & 13.0 & 40.9 \\

\rowcolor{dropdrow}
\textbf{Dr. OPD}
& \textbf{9.2}\gain{2.3}
& 4.8\loss{0.2}
& \textbf{31.3}\gain{2.5}
& \textbf{21.3}\gain{1.2}
& \textbf{26.5}\gain{3.2}
& \textbf{18.6}\gain{1.8}
& \textbf{64.3}\gain{10.3}
& \textbf{58.5}\gain{3.5}
& \textbf{16.9}\gain{5.2}
& \textbf{46.6}\gain{6.4} \\

\bottomrule
\end{tabular}
\end{adjustbox}

\caption{Strong-to-weak distillation.
We distill Qwen3-4B into Instruct and Base Qwen3-1.7B students on
math and code generation.
Colored subscripts report the absolute change over vanilla OPD.
Bold denotes the best
distillation result within each student setting.
}
\label{tab:s2w}
\end{table}

\paragraph{Same-size distillation.}
We next consider the setting where the student and teacher have the same model size, with the teacher obtained through further RL training. As shown in Table~\ref{tab:samesize}, vanilla OPD already brings the student close to or even beyond the teacher, leaving limited room for further gains by simply mimicking the teacher. Despite this, Dr. OPD still achieves the best average performance for both setups, improving over vanilla OPD by 2.3 and 3.0 points, respectively.

\textbf{Surpassing the RL-trained teacher.}
Notably, Dr. OPD surpasses the RL-trained teacher in both settings. For the Instruct student, it exceeds the teacher on all five benchmarks, a level of consistent improvement not achieved by any competing method. For the Base student, Dr. OPD exceeds the teacher on four of the five benchmarks and improves the average performance from 34.1 to 36.0.

\begin{table}[!ht]
\centering
\footnotesize
\setlength{\tabcolsep}{5.2pt}
\renewcommand{\arraystretch}{1.08}

\begin{adjustbox}{width=0.70\linewidth}
\begin{tabular}{@{}lcccccc@{}}
\toprule
Method
& AIME24 & AIME25 & AMC & Minerva & Olym. & \textbf{Avg.} \\
\midrule

\multicolumn{7}{@{}l}{
\textit{Student: Qwen3-4B \qquad Teacher: Qwen3-4B-RL}
} \\
\addlinespace[1pt]

Teacher
& 54.0 & 46.5 & 81.0 & 36.9 & 57.7 & 55.2 \\

Initial
& 22.5 & 16.9 & 60.5 & 27.1 & 43.7 & 34.1 \\

\noalign{\vskip 1.5pt}
\cdashline{1-7}
\noalign{\vskip 1.5pt}

OPD
& 55.8 & 45.8 & 81.0 & 37.3 & 57.9 & 55.6 \\

ExOPD
& 56.9 & 43.3 & 82.3 & 37.6 & 56.6 & 55.3 \\

OPD+GRPO
& 55.6 & 46.3 & 82.0 & 37.5 & 57.7 & 55.8 \\

GRPD
& 55.8 & 49.2 & 81.6 & 37.6 & 57.0 & 56.2 \\

\rowcolor{dropdrow}
\textbf{Dr. OPD}
& \textbf{58.3}\gain{2.5}
& \textbf{51.7}\gain{5.8}
& \textbf{82.9}\gain{1.9}
& \textbf{37.8}\gain{0.5}
& \textbf{58.5}\gain{0.6}
& \textbf{57.8}\gain{2.3} \\

\midrule

\multicolumn{7}{@{}l}{
\textit{Student: Qwen3-4B-Base \qquad Teacher: Qwen3-4B-Base-RL}
} \\
\addlinespace[1pt]

Teacher
& 23.1 & 22.1 & 55.4 & 26.4 & 43.5 & 34.1 \\

Initial
& 11.9 & 8.5 & 36.8 & 9.7 & 27.6 & 18.9 \\

\noalign{\vskip 1.5pt}
\cdashline{1-7}
\noalign{\vskip 1.5pt}

OPD
& 21.0 & 19.2 & 56.0 & 27.4 & 41.5 & 33.0 \\

ExOPD
& \textbf{26.9} & 19.2 & 56.7 & 25.6 & 43.1 & 34.3 \\

OPD+GRPO
& 21.9 & \textbf{21.0} & 57.5 & 27.2 & 42.3 & 34.0 \\

GRPD
& 21.5 & 18.8 & 56.8 & 27.6 & 43.1 & 33.5 \\

\rowcolor{dropdrow}
\textbf{Dr. OPD}
& 25.4\gain{4.4}
& 20.0\gain{0.8}
& \textbf{58.4}\gain{2.4}
& \textbf{31.9}\gain{4.5}
& \textbf{44.1}\gain{2.6}
& \textbf{36.0}\gain{3.0} \\

\bottomrule
\end{tabular}
\end{adjustbox}

\caption{Same-size distillation.
We evaluate Instruct and Base students distilled from their corresponding
RL-trained teachers on mathematical reasoning.
Colored subscripts in the Dr. OPD rows report the absolute improvement
over vanilla OPD.
Teacher and Initial are reference points; bold denotes the best
distillation result within each student--teacher setting.
}
\label{tab:samesize}

\end{table}

\subsection{Ablation Study: Effect of \texorpdfstring{$\lambda$}{lambda}}
\label{sec:exp-ablation}

The weighting strength $\lambda$ is the main hyperparameter in Algorithm \ref{alg:dr_opd},
controlling how strongly the token weights can deviate from the vanilla OPD value of $1$. When $\lambda=0$, Dr. OPD reduces to vanilla OPD. As shown in
Table~\ref{tab:ablation-lambda}, every positive value of $\lambda$ improves
over OPD on all five benchmarks and in average performance, suggesting that
Dr. OPD does not require precise tuning of $\lambda$. Among the values tested,
$\lambda=0.4$ performs best overall and is used in our main experiments.

\begin{table}[!ht]
\centering
\footnotesize
\setlength{\tabcolsep}{5.0pt}
\renewcommand{\arraystretch}{1.08}

\begin{adjustbox}{width=0.6\linewidth}
\begin{tabular}{@{}lcccccc@{}}
\toprule
$\lambda$
& AIME24 & AIME25 & AMC & Minerva & Olym. & \textbf{Avg.} \\
\midrule

0 (OPD)
& 15.4 & 12.5 & 43.7 & 19.4 & 35.8 & 25.3 \\

\noalign{\vskip 1.5pt}
\cdashline{1-7}
\noalign{\vskip 1.5pt}

0.2
& 18.5 & 13.5 & 47.8 & 22.8 & 37.7 & 28.1 \\

0.3
& 19.4 & 14.0 & 45.0 & 23.2 & 37.5 & 27.8 \\

\rowcolor{dropdrow}
\textbf{0.4}
& \textbf{25.6}
& \textbf{23.5}
& \textbf{56.9}
& 24.6
& \textbf{44.4}
& \textbf{35.0} \\

0.5 & 24.0 & 19.2 & 54.7 & \textbf{25.0} & 42.6 & 33.1 \\
0.6 & 23.8 & 19.8 & 54.1 & 23.3 & 44.0 & 33.0 \\

\bottomrule
\end{tabular}
\end{adjustbox}

\caption{Effect of the weighting strength $\lambda$ on mathematical reasoning.
We vary $\lambda$ for the Qwen3-1.7B student distilled from Qwen3-4B.
$\lambda=0$ reduces to vanilla OPD, while larger $\lambda$
allows stronger token-level weight adjustments. Bold denotes the best result in each column.
}
\label{tab:ablation-lambda}
\end{table}

\section{Related Works}
\label{sec:related_work}

\textbf{On-policy distillation.}
Learning from teacher feedback on student-generated sequences addresses the mismatch between training trajectories and the contexts encountered by the student at inference. 
GKD studies this principle with a family of divergence objectives \citep{agarwal2024onpolicydistillationlanguagemodels}, while MiniLLM develops reverse-KL optimization for language-model distillation \citep{gu2026minillmonpolicydistillationlarge}. Subsequent work demonstrates its effectiveness for reasoning post-training \citep{yang2025qwen3technicalreport}. Multi-teacher OPD also consolidates specialist capabilities in MiMo-V2-Flash, DeepSeek-V4, and Kimi K3 \citep{coreteam2026mimov2flashtechnicalreport,deepseekai2026deepseekv4,kimiteam2026kimik3}. Beyond direct teacher matching, G-OPD and ExOPD introduce a reference policy and reward extrapolation \citep{yang2026learningteachergeneralizedonpolicy}, and OPD$^2$ uses the difference between a reasoning-tuned teacher and its base model as the distillation signal \citep{heo2026opd2}. Analyses further identify teacher--student compatibility as a factor in successful transfer \citep{li2026rethinking}. 

\paragraph{Trust and weighting in OPD.}
Several methods already adapt the strength or location of teacher supervision. REOPOLD stabilizes learning through reward clipping and entropy-based token sampling \citep{ko2026reopold}. TrOPD identifies trustworthy regions through teacher--student agreement and treats outlier tokens with alternative objectives \citep{xing2026tropd}. IW-OPD weights tokens using accumulated distribution discrepancy to address deteriorating supervision along long student trajectories \citep{xie2026positionbias}. REOPD combines token-level compatibility weights with a batch-level budget to adapt extrapolation beyond the teacher \citep{sun2026reopdreliabilityadaptiverewardextrapolation}. For multiple teachers, TrustMOPD allocates token-level supervision using each specialist's displacement from a shared pre-RL reference \citep{sun2026trustmopd}. Outcome feedback also supports selective distillation: OPDVR gates sampled-token signals according to response correctness \citep{lin2026onpolicydistillationverifiablereward}, while VG-OPD verifies an expert's advantage on individual answer criteria and localizes its supervision through token disagreement \citep{xu2026vgopd}. 

Concurrent work SparseOPD \citep{liu2026extremely} studies extremely sparse supervision for OPD, showing that retaining only one or two token-level teacher signals per trajectory can match or outperform dense vanilla OPD. SparseOPD can be viewed as a binary-weighted OPD  that masks most token-level signals, while Dr. OPD, instead, adaptively assigns continuous weights according to how each teacher signal affects the student's performance. Thus, SparseOPD studies how little supervision is sufficient, while Dr. OPD studies how teacher supervision should be weighted to maximize student performance.
Closely related to our motivation, Unmasking OPD diagnoses teacher signals through alignment with an estimated success-improving gradient in token-logit space \citep{armandpour2026unmasking}. Dr. OPD uses the sensitivity of the updated student's reward to each supervision weight as an optimization criterion. This yields parameter-space credits for teacher corrections and a clipped weighting rule derived from a reward lower bound relative to vanilla OPD.

\paragraph{Bilevel optimization, reweighting, and influence.}
Selecting supervision through the performance of the learner connects to bilevel optimization and differentiation through learning dynamics \citep{franceschi2018bilevel,shaban2019truncated}. A common strategy in this literature is to evaluate the outer objective through one or a few gradient updates of the inner learner, as in MAML \citep{finn2017model}. This idea is also used for data reweighting, where \citet{ren2019learningreweightexamplesrobust} selects example weights through a one-step learner update evaluated on clean validation data, leading to a gradient-alignment criterion. Influence-function methods study how upweighting training examples affects the learned model, typically through an inverse-Hessian correction \citep{koh2017influence}, while TracIn uses gradient inner products along optimization trajectories \citep{pruthi2020tracin}. Our method brings these ideas to token-level teacher supervision: we measure the reward sensitivity of a one-step OPD update, use outcome reward as the outer objective, and derive finite weight updates from a lower bound on reward improvement.

\section{Conclusion}

We proposed \textbf{Dr. OPD}, a bilevel optimization formulation that finds the optimal weighted OPD objective for improving students' task performance. The lower-level problem trains the student under a given weight function, while the upper-level problem selects the weights according to the reward of the resulting student. To solve Dr. OPD efficiently, we developed an iterative solver with closed-form weight updates followed by a standard gradient update on the weighted OPD. Across all evaluated experimental setups, Dr. OPD consistently outperforms existing baselines and can even enable students to surpass their teachers.

\subsection*{AI Use Statement}
In this work, we used generative AI tools as follows. Among tasks requiring disclosure, AI tools assisted with implementing and debugging code, preparing scripts for experiments, and checking and refining mathematical derivations and proofs. All AI-assisted code was inspected and executed by the authors, all reported experimental results were obtained from our own training and evaluation runs, and all AI-assisted mathematical arguments were independently verified by the authors.

We did not use generative AI tools for research ideation, developing the proposed method, designing the experiments, selecting hyperparameters, or providing methodological feedback. Generative AI tools did not autonomously run experiments or generate experimental results.

Among the tasks for which disclosure is recommended, we used generative AI tools to assist with producing scientific figures and plotting code, as well as drafting and editing the manuscript for clarity and readability. All AI-assisted text were reviewed and revised by the authors. We take full responsibility for the final content of this work, including all text, claims, results, and artifacts produced with the aid of generative AI.

\bibliography{ref_new}
\bibliographystyle{plainnat}

\clearpage
\appendix
\numberwithin{equation}{section}

\section{Hyperparameters and Experimental Setup}
\label{app:hparams}

\subsection{Shared training configuration}
All methods are implemented on top of \texttt{verl}~\citep{Sheng_2025} and share
the same trainer, rollout engine, and outcome verifier; they differ only in how the
per-token learning signal is constructed. Each run uses a single node with 8$\times$H100 GPUs.

Both RL teachers are public checkpoints that we keep frozen, without any further
training: the math-RL Qwen3-4B of \citet{yang2026learningteachergeneralizedonpolicy}
and the GRPO-trained Qwen3-4B-Base of \citet{li2026rethinking}. Within each student--teacher setting, all methods use the same frozen
teacher, so their comparison isolates the choice of training objective.

Table~\ref{tab:app-shared} lists the hyperparameters shared by all methods and
both tasks. The learning rate,
batch sizes, rollout settings, and sequence limits are identical for every
method. Table~\ref{tab:app-methods} lists the method-specific settings.
Dr.\ OPD reweights all valid response tokens, using a separate credit RMS
for each prompt and the bounds $w_{\min}=0.001$ and $w_{\max}=3$;
Appendix~\ref{app:method_implementation} gives the full implementation.

\begin{table}[!ht]
\centering
\caption{Hyperparameters shared by all methods and both tasks.}
\label{tab:app-shared}
\footnotesize
\setlength{\tabcolsep}{6pt}
\begin{tabular}{ll}
\toprule
Hyperparameter & Value \\
\midrule
Learning rate & $1\times10^{-5}$ \\
LR schedule & constant \\
Gradient clipping & 1.0 \\
Batch / Mini-batch size & 128 \\
Rollouts per prompt $K$ & 8 \\
Rollouts per step & 1024 \\
Max prompt length & 1024 tokens \\
Max response length & 12288 tokens \\
Temperature & 1.0 \\
\bottomrule
\end{tabular}
\end{table}

\begin{table}[!ht]
\centering
\caption{Method-specific hyperparameters. All other settings follow
Table~\ref{tab:app-shared}.}
\label{tab:app-methods}
\footnotesize
\begin{tabular}{ll}
\toprule
Method & Hyperparameters \\
\midrule
OPD & None \\
ExOPD & $\lambda = 1.25$; the initial student as reference policy \\
OPD+GRPO & Equal weighting (1:1) of the distillation and outcome terms \\
GRPD & None (default settings of the released method) \\
Dr.\ OPD & $\lambda = 0.4$; $w_{\min} = 0.001$; $w_{\max}=3$  \\
\bottomrule
\end{tabular}
\end{table}

\FloatBarrier

\subsection{Training Data}
\label{app:training_data}
For mathematical reasoning, we train on DeepMath-103K~\citep{he2025deepmath103klargescalechallengingdecontaminated}.
The dataset provides difficulty labels. We train the Qwen3-1.7B-Base student on
the $[3,4)$ band and every other student on the $[6,10]$ band.
For code generation, we train on Eurus-RL-Code~\citep{cui2025processreinforcementimplicitrewards},
again with the Qwen3-1.7B-Base student as the exception: it is trained on an
easier corpus drawn from APPS (introductory)~\citep{hendrycks2021measuringcodingchallengecompetence} and
TACO (EASY)~\citep{li2023tacotopicsalgorithmiccode}. These easier subsets increase the frequency of groups containing both
correct and incorrect responses for the Base student. Such groups provide
nonzero group-relative advantages; groups with identical rewards have
zero advantages.

\FloatBarrier

\subsection{Prompt Templates and Stopping Rules}
\label{app:prompts}

\paragraph{Chat template}
For all students except Qwen3-1.7B-Base, we use the model's own chat
template with thinking disabled (\texttt{enable\_thinking=False}).
The problem statement forms a single user message, followed by the
suffix supplied with the training set. The math template is shown below:

\begin{promptbox}[title={Zero-shot template, Qwen3-1.7B student (training and evaluation)}]
{question}
Please reason step by step, and put your final answer within \boxed{}.
\end{promptbox}

\paragraph{Few-shot template}
Qwen3-1.7B-Base has no chat template and does not reliably follow the
zero-shot instruction format. For this student, we use the fixed four-shot
math prompt below.

\begin{promptbox}[title={4-shot template, Qwen3-1.7B-Base student (training and evaluation)}]
Solve the following math problems. Reason step by step, and put your final
answer within \boxed{}.

Problem:
What is $1+2+3+\cdots+10$?

Solution:
The sum of the first $n$ positive integers is $\frac{n(n+1)}{2}$. For $n=10$
this is $\frac{10\cdot 11}{2}=55$. The final answer is $\boxed{55}$.

Problem:
Solve for $x$: $2x+3=11$.

Solution:
Subtracting $3$ from both sides gives $2x=8$. Dividing both sides by $2$ gives
$x=4$. The final answer is $\boxed{4}$.

Problem:
A rectangle has length $8$ and width $5$. What is its area?

Solution:
The area of a rectangle is length times width, so it is $8\times 5=40$. The
final answer is $\boxed{40}$.

Problem:
A fair coin is flipped twice. What is the probability that both flips are heads?

Solution:
The flips are independent, each heads with probability $\frac{1}{2}$. So both
are heads with probability $\frac{1}{2}\cdot\frac{1}{2}=\frac{1}{4}$. The final
answer is $\boxed{\frac{1}{4}}$.

Problem:
{question}

Solution:
\end{promptbox}

\paragraph{Stopping rule.}
With the few-shot prompt, the student may continue by generating additional
problems. We therefore truncate each response at the first completed \verb|\boxed{...}| or at the next-problem delimiter \verb|"\nProblem:"|,
whichever comes first; everywhere else generation ends at the end-of-sequence token or at the $12{,}288$-token limit. Truncation is applied to the sampled token IDs after generation. We use
the same stopping rule for every method during both training and evaluation.

\FloatBarrier

\section{Implementation of Dr. OPD}
\label{app:method_implementation}

For simplicity, Algorithm~\ref{alg:dr_opd} uses the raw GRPO estimate of the reward gradient $\widehat r$ to compute the token credits $c_t$. In our implementation, we additionally smooth $\widehat r$ and apply the coordinate-wise scaling induced by the actor optimizer, matching the transformation applied to the reward gradient in standard GRPO policy optimization.

\subsection{Group-relative reward signal}

Index training rounds by $k$, prompts by $b$, and the $K$ responses to a
prompt by $i$. We suppress $k$ on rollout-level quantities when the round
is clear. Let $r_{bi}=V(x_b,y_{bi})$, and let $T_{bi}$ be the number of
valid response tokens. The group-relative advantages are
\begin{equation*}
 \overline r_b=\frac1K\sum_{i=1}^K r_{bi},
 \qquad
 s_b=\left[\frac{1}{K-1}\sum_{i=1}^K
                    (r_{bi}-\overline r_b)^2\right]^{1/2},
 \qquad
 A_{bi}=\frac{r_{bi}-\overline r_b}{s_b+\epsilon_A}.
\end{equation*}
Here $K=8$ and $\epsilon_A>0$ stabilizes the denominator. If all rewards
in a group agree, its advantages are zero. With advantages and response
lengths held fixed, the length-normalized reward direction is
\begin{equation*}
 G_k=\sum_b\frac1K\sum_{i=1}^K\frac{A_{bi}}{T_{bi}}
        \sum_{t=1}^{T_{bi}}
        \left.\nabla_\theta\log p_\theta(y_{bi,t}\mid s_{bi,t})
        \right|_{\theta=\bar\theta_k},
\end{equation*}
up to the common distributed averaging factor. It is computed by
differentiating a negative log-probability surrogate with detached
coefficients $A_{bi}/(KT_{bi})$ and reversing the gradient's sign.
The factor $1/T_{bi}$ normalizes each response's contribution; a response
with no valid tokens contributes zero.

\subsection{Optimizer-scaled reward direction}
\label{app:reward_direction}

We maintain a reward first moment, separate from the actor optimizer's
first moment:
\begin{equation*}
 m_k^R=\beta_1m_{k-1}^R+(1-\beta_1)G_k,
 \qquad m_0^R=0.
\end{equation*}
The direction used in round $k$ is constructed from the state available
before the current actor update. For $k>1$, let
$\widehat m_{k-1}^R=m_{k-1}^R/(1-\beta_1^{k-1})$. Let
$v^{\mathrm{actor}}$ be the actor AdamW optimizer's stored second moment,
$j$ its number of completed optimizer steps, and
$\widehat v^{\mathrm{actor}}=v^{\mathrm{actor}}/(1-\beta_2^j)$.
The credit direction is
\begin{equation*}
 u_k=\frac{\widehat m_{k-1}^R}
           {\sqrt{\widehat v^{\mathrm{actor}}}+\epsilon_{\mathrm{Adam}}},
\end{equation*}
where the operations are elementwise. Thus the numerator uses reward
momentum, while the denominator reuses the actor's second moment.
The current $G_k$ updates the reward momentum for subsequent rounds.
If the required optimizer or momentum state is unavailable, all weights
are set to one. The decoupled weight-decay term is excluded from $u_k$.

\subsection{Token credits, RMS scale, and weight clipping}
\label{app:practical_details}

For each sampled token, define the teacher log-ratio and credit by
\begin{align*}
 d_{bi,t}
 &=\log\frac{p^*(y_{bi,t}\mid s_{bi,t})}
                 {p_{\bar\theta_k}(y_{bi,t}\mid s_{bi,t})},\\
 \widehat c_{bi,t}
 &=d_{bi,t}\left.
       \frac{\mathrm d}{\mathrm d\alpha}
       \log p_{\bar\theta_k+\alpha u_k}(y_{bi,t}\mid s_{bi,t})
       \right|_{\alpha=0}
 =g_{bi,t}^{\top}u_k.
\end{align*}
A forward-mode JVP computes these directional derivatives without
materializing a parameter gradient for each token. The student replica
used for the JVP is synchronized with the actor, and dropout is disabled.
The derivative is evaluated by automatic differentiation, without a
finite-difference perturbation.

Let $\mathcal I_{kb}$ contain all valid response-token positions across
the $K$ rollouts for prompt $x_b$ in round $k$:
\begin{equation*}
 \mathcal I_{kb}=\{(i,t):1\le i\le K,\ 1\le t\le T_{bi}\},
 \qquad N_{kb}=|\mathcal I_{kb}|=\sum_{i=1}^K T_{bi}.
\end{equation*}
The explicit RMS scale used in \eqref{eq:practical_weight_update} is
\begin{equation*}
 \boxed{
 \sigma_{kb}
 =\left(\frac{1}{N_{kb}}
          \sum_{i=1}^{K}\sum_{t=1}^{T_{bi}}
             \widehat c_{bi,t}^{\,2}\right)^{1/2}
 }
 \qquad (N_{kb}>0).
\end{equation*}
This is a token-weighted RMS across the responses to one prompt, without
subtracting the mean credit. Prompt and padding tokens are excluded;
valid response tokens with zero credit are included. No probability-based token selection is applied.
For each $(i,t)\in\mathcal I_{kb}$, we set
\begin{equation*}
 \widehat w_{bi,t}
 =\clip\!\left(
      1+\lambda\frac{\widehat c_{bi,t}}
                         {\max(\sigma_{kb},\epsilon_\sigma)},
      \,0.001,\,3\right),
 \qquad \lambda=0.4
\end{equation*}
in the main experiments; the ablation varies $\lambda$.
Here $\epsilon_\sigma>0$ is the stability constant denoted by
$\epsilon$ in \eqref{eq:practical_weight_update}. If all credits for a
prompt are zero, its valid tokens receive weight one. If $N_{kb}=0$,
we set $\sigma_{kb}=0$ and that prompt contributes no response tokens
to the actor loss. The positive lower bound preserves strictly positive
weights, as required by the weighted-distillation formulation.

\subsection{Actor update}

The detached products $\widehat w_{bi,t}d_{bi,t}$ are passed as token
advantages to the actor loss used by vanilla OPD. The actor uses a
PPO-style probability-ratio objective, including ratio clipping and
dual clipping for negative advantages, token-mean aggregation, and
AdamW. At the reference policy, where the probability ratio equals one,
the ascent gradient is the token average of
$\widehat w_{bi,t}g_{bi,t}$. Gradients used to construct the reward
direction are cleared before the actor update. The weights and teacher
signals remain detached during that update.

\FloatBarrier

\section{Proofs and Theoretical Details}
\label{app:proofs}

This section collects the assumptions, derivations, and proofs used in
Sections~\ref{sec:formulation}--\ref{sec:local_weighting}. We first specify
the reference distribution for one optimization round, then establish the
weighted-distillation properties and the reward-improvement guarantee.

\subsection{Notation and regularity conditions}
\label{app:local_analysis}
\label{app:reward_bound}

Fix the current student $\bar\theta$. We assume a finite vocabulary and
strictly positive student and teacher probabilities. A feasible weight
function is a measurable function in
\[
 \mathcal W=\{w:w_{\min}\le w(s,v)\le w_{\max}\},
 \qquad 0<w_{\min}<1<w_{\max}<\infty.
\]
The reference context distribution and $w$ are held fixed when
we differentiate the distillation objective. In particular, the loss for
this round is
\begin{equation}
 D_w^{\bar\theta}(\theta)
 :=\EE_{x,\,y\sim p_{\bar\theta}(\cdot\mid x)}
 \left[\sum_t \KL_{w(s_t,\cdot)}
 \bigl(p_\theta(\cdot\mid s_t)\|p^*(\cdot\mid s_t)\bigr)\right].
 \label{eq:frozen_objective}
\end{equation}
We suppress the superscript $\bar\theta$ below and in the main text.
The reference distribution is refreshed between rounds.

Define the reference measure on context--token pairs by
\begin{equation*}
 \int f(s,v)\,\mathrm d\mu(s,v)
 :=\EE_{x,\,y\sim p_{\bar\theta}(\cdot\mid x)}
       \left[\sum_{t=1}^T f(s_t,y_t)\right]
\end{equation*}
for nonnegative measurable $f$. Its total mass is $\EE[T]$; thus $\mu$
is an unnormalized token-occupancy measure. It includes both context
visitation and the conditional probability of the sampled token. We use
$\langle f,h\rangle_\mu=\int fh\,\mathrm d\mu$ and
$\|f\|_\mu^2=\int f^2\,\mathrm d\mu$ for scalar functions, and write
$g(s_t,y_t)=g_t$ for the teacher correction in
\eqref{eq:weighted_opd_gradient}.

\begin{assumption}[Local regularity and integrability]
\label{ass:local_regularity}
Let $\mu$ denote the fixed reference distribution over sampled trajectories, and let
$\mathcal W=\{w:w_{\min}\leq w\leq w_{\max}\}$.
Assume the following conditions hold.

\begin{enumerate}[label=(\roman*), leftmargin=2em]
    \item \textbf{Integrability.}
    The expected response length is finite,
    \[
        \EE_{\mu}[T] < \infty.
    \]
    Moreover, the weighted OPD objective is finite in a neighborhood of
    $\bar\theta$, and differentiation can be interchanged with expectation.
    In particular,
    \[
        \nabla_\theta D_w(\theta)
        =
        \EE_{\mu}\!\left[
            \nabla_\theta \ell_w(\theta;Z)
        \right],
        \qquad
        w\in\mathcal W,
    \]
    for all $\theta$ in this neighborhood.

    \item \textbf{Finite second moment.}
    The token-level teacher correction satisfies
    \begin{equation*}
        M
        :=
        \EE_{\mu}\!\left[
            \sum_{t=1}^{T}\|g_t\|^2
        \right]
        <\infty.
    \end{equation*}

    \item \textbf{Local smoothness of the reward.}
    There exists an open convex set $U$ such that
    \[
        \{\bar\theta\}
        \cup
        \{\theta^+(w):w\in\mathcal W\}
        \subset U,
    \]
    and $R$ is continuously differentiable on $U$ with
    $L_R$-Lipschitz gradient:
    \[
        \|\nabla R(\theta)-\nabla R(\theta')\|
        \leq
        L_R\|\theta-\theta'\|,
        \qquad
        \forall\,\theta,\theta'\in U.
    \]
\end{enumerate}
All parameter norms are Euclidean.
\end{assumption}

Cauchy--Schwarz gives $\int\|g\|\,\mathrm d\mu\le\sqrt{\EE[T]M}$,
so the response in \eqref{eq:one_step_update} satisfies
\begin{equation*}
 \|\theta^+(w)-\bar\theta\|
 \le\eta w_{\max}\sqrt{\EE[T]M}.
\end{equation*}
Thus a sufficiently small student step size keeps all responses in a
fixed neighborhood where the required smoothness holds. The condition
$\lambda\eta L_RM\le1$ is equivalent to
$\lambda\le1/(\eta L_RM)$ when $L_RM>0$; when $L_RM=0$, it imposes
no further restriction on positive $\lambda$.

Unlike the fixed-reference distillation loss, the reward $R(\theta)$
is evaluated under the student's own response distribution. Under the
corresponding differentiation and summability conditions,
\begin{equation*}
 \nabla R(\theta)
 =\EE_{x,\,y\sim p_\theta}
       \left[V(x,y)\sum_t\nabla_\theta
                     \log p_\theta(y_t\mid s_t)\right].
\end{equation*}
This follows by differentiating
$p_\theta(y\mid x)=\prod_t p_\theta(y_t\mid s_t)$, with termination
included in the response convention. 

\subsection{Properties of weighted OPD}
\label{app:weighted_objective}

\subsubsection{Proof of Theorem~\ref{thm:weighted_opd_validity}}
\label{app:weighted_validity}

\begin{proof}
For positive $a,b$, define
\begin{equation*}
 d(a,b)=a\log(a/b)-a+b=b\phi(a/b),
 \qquad \phi(u)=u\log u-u+1.
\end{equation*}
Since $\phi'(u)=\log u$ and $\phi''(u)=1/u>0$, its unique minimum
is $\phi(1)=0$. Hence $d(a,b)\ge0$, with equality if and only if
$a=b$. At each context $s$,
\[
 \KL_{w(s,\cdot)}(p_\theta\|p^*)
 =\sum_v w(s,v)d\bigl(p_\theta(v\mid s),p^*(v\mid s)\bigr)\ge0.
\]
Because every weight is strictly positive, equality holds precisely
when the student and teacher probabilities agree for every token.
Taking the reference expectation proves $D_w(\theta)\ge0$.
The expected loss is zero if and only if the distributions agree at
almost every context under the reference context measure. In
particular, the claim holds at every context of positive reference
mass; the objective places no restriction on contexts of zero measure.
Finally, when $w\equiv1$, the correction terms sum to zero by
normalization, leaving the ordinary reverse KL and the vanilla OPD loss.
\end{proof}

The linear correction is needed for token-dependent weights. For
example, if $p=(1/4,3/4)$, $p^*=(1/2,1/2)$, and $w=(4,1)$, then
$\sum_v w_vp_v\log(p_v/p_v^*)=-\log2+(3/4)\log(3/2)<0$.
The corrected divergence remains nonnegative term by term. When
$w(s,v)=a(s)$ is independent of the token, the correction cancels and
we recover $a(s)\KL(p_\theta(\cdot\mid s)\|p^*(\cdot\mid s))$.

\subsubsection{Derivation of the sampled-token gradient}
\label{app:weighted_gradient}

\begin{proof}[Derivation of \eqref{eq:weighted_opd_gradient}]
At a fixed context, let $p_v=p_\theta(v\mid s)$ and
$q_v=p^*(v\mid s)$. Differentiating with $w$ fixed gives
\begin{align*}
 \nabla_\theta\KL_{w(s,\cdot)}(p_\theta\|p^*)
 &=\sum_v w(s,v)\bigl[\log(p_v/q_v)+1-1\bigr]\nabla_\theta p_v
 \\
 &=\sum_v p_vw(s,v)\log(p_v/q_v)\nabla_\theta\log p_v.
\end{align*}
The derivative of $-p_v$ cancels the additional $1$ separately for
each token. At $\theta=\bar\theta$, the last sum is a conditional
expectation under the reference student's next-token distribution.
Taking the reference expectation over contexts therefore yields
\begin{equation*}
 -\nabla D_w(\bar\theta)
 =\EE\left[\sum_t w_t
       \log\frac{p_t^*}{p_t(\bar\theta)}
       \left.\nabla_\theta\log p_\theta(y_t\mid s_t)
       \right|_{\theta=\bar\theta}\right]
 =\EE\left[\sum_t w_tg_t\right].
\end{equation*}
The interchange is justified by Assumption~\ref{ass:local_regularity}.
\end{proof}

The identity allows token-dependent $w$ because the full conditional
expectation is retained. Differentiating the reference sampling law,
or recomputing $w$ as a function of the optimization variable, would
introduce additional terms.

An explicit unbiased sampled loss is
\begin{equation*}
 \widehat D_w(\theta;x,y)
 :=\sum_t \frac{w_t}{p_t(\bar\theta)}
       \left[p_t(\theta)\log\frac{p_t(\theta)}{p_t^*}
             -p_t(\theta)+p_t^*\right],
\end{equation*}
where $(x,y)$ is sampled from the reference policy and the denominator
and weights are fixed. Conditional expectation over $y_t$ recovers
the vocabulary sum in \eqref{eq:frozen_objective}. At $\bar\theta$,
the factor $p_t(\theta)/p_t(\bar\theta)$ in its gradient is one, so
$-\nabla\widehat D_w(\bar\theta;x,y)=\sum_t w_tg_t$.
This explains why \eqref{eq:weighted_opd_gradient} contains no explicit
importance denominator. At another parameter value, the probability
ratio must be retained when using the same reference rollouts.

For the interpretation of $g_t$, let
$z_t=\left.\nabla_\theta\log p_\theta(y_t\mid s_t)\right|_{\bar\theta}$.
Then
\begin{equation*}
 \left.\frac{\mathrm d}{\mathrm d\epsilon}
 \log p_{\bar\theta+\epsilon g_t}(y_t\mid s_t)
 \right|_{\epsilon=0}
 =\log\frac{p_t^*}{p_t(\bar\theta)}\,\|z_t\|^2.
\end{equation*}
Thus the infinitesimal correction increases or decreases the sampled
token's log-probability according to the teacher--student log-ratio,
unless $z_t=0$. This is a local statement about one token; descent of
the full distillation loss follows from the aggregate gradient.

\subsubsection{Descent of the lower-level objective}
\label{app:inner_step}

If, in addition, $D_w$ has an $L_D$-Lipschitz gradient on $U$,
uniformly over feasible $w$, its one-step response decreases the
fixed-reference loss whenever $\eta L_D\le1$.
\begin{proof}
The smoothness inequality and
$\theta^+(w)=\bar\theta-\eta\nabla D_w(\bar\theta)$ give
\begin{align*}
 D_w(\theta^+(w))
 &\le D_w(\bar\theta)
       -\eta\left(1-\frac{\eta L_D}{2}\right)
                      \|\nabla D_w(\bar\theta)\|^2\\
 &\le D_w(\bar\theta)-\frac{\eta}{2}
                      \|\nabla D_w(\bar\theta)\|^2.
\end{align*}
\end{proof}
This additional loss-smoothness assumption is only needed for the
descent statement, not for the reward bounds below.

\subsection{Bilevel formulation with multiple inner solutions}
\label{app:bilevel}

Let $\mathcal P$ denote the student parameter space. For a fixed
reference distribution, define
\begin{equation*}
 \Theta(w):=\argmin_{\theta\in\mathcal P}D_w(\theta).
\end{equation*}
The optimistic convention used in \eqref{eq:bilevel} selects the
highest-reward member of this set:
\begin{equation}
 \max_{w\in\mathcal W}\;\max_{\theta\in\Theta(w)}R(\theta).
 \label{eq:optimistic_bilevel}
\end{equation}
We assume the inner solution sets are nonempty and that the displayed
maxima are attained. If a reward maximum is not attained, the
corresponding optimal value is written as a supremum; an empty inner
solution set instead requires a different formulation.

If exact teacher matching is attainable, all feasible positive
weightings share the same zero-loss minimizers by
Theorem~\ref{thm:weighted_opd_validity}. Weighting can change the
best attainable compromise when exact matching is infeasible and the
learning trajectory before convergence. Our local analysis replaces
$\Theta(w)$ by the specified response $\theta^+(w)$. It therefore
does not require an attained or unique global inner optimum and does
not establish convergence to \eqref{eq:optimistic_bilevel}.

\subsection{Credit as sensitivity to the weight function}
\label{app:credit}

Let $F(w)=R(\theta^+(w))$ and define
$c(s,v)=g(s,v)^\top\nabla R(\theta_{\mathrm{vani}}^+)$.
The next calculation makes the derivative notation in
\eqref{eq:reward_credit} precise.

\begin{proof}[Derivation of the credit]
For a bounded measurable perturbation $h$, the function
$1+\epsilon h$ is feasible for sufficiently small $|\epsilon|$.
Linearity of the response gives
\[
 \theta^+(1+\epsilon h)
 =\theta_{\mathrm{vani}}^++\eta\epsilon\int hg\,\mathrm d\mu.
\]
Applying the chain rule,
\begin{equation}
 \left.\frac{\mathrm d}{\mathrm d\epsilon}F(1+\epsilon h)
 \right|_{\epsilon=0}
 =\eta\int h\,g^\top\nabla R(\theta_{\mathrm{vani}}^+)\,\mathrm d\mu
 =\eta\langle h,c\rangle_\mu.
 \label{eq:functional_credit}
\end{equation}
Moreover,
$\|c\|_\mu^2\le M\|\nabla R(\theta_{\mathrm{vani}}^+)\|^2<\infty$.
Thus the $L^2(\mu)$ gradient of $F$ at $w=1$ is $\eta c$.
\end{proof}

Accordingly, $\partial R(\theta^+(w))/\partial w_t$ in the main
text denotes sensitivity per unit reference measure, evaluated at
$(s_t,y_t)$. For an atom $(s,v)$, the ordinary coordinate derivative
instead includes its mass:
\begin{equation*}
 \left.\frac{\partial F(w)}{\partial w(s,v)}\right|_{w=1}
 =\eta\mu(\{(s,v)\})c(s,v).
\end{equation*}
A derivative with respect to a weight in a finite sample average
likewise includes that average's coefficient. Changing weights only
on a set of zero reference measure has no effect on $F$.

\subsection{Proof of Proposition~\ref{prop:reward_lower_bound}}
\label{app:lower_bound_proof}

\begin{proof}
The argument has two steps: bound the change in the student response,
then apply reward smoothness.

\noindent\emph{Step 1: Control the response change.}
Let $h=w-1$ and
$\Delta(w)=\theta^+(w)-\theta_{\mathrm{vani}}^+
=\eta\int hg\,\mathrm d\mu$. By Cauchy--Schwarz,
\begin{equation*}
 \|\Delta(w)\|^2
 \le\eta^2\left(\int h^2\,\mathrm d\mu\right)
            \left(\int\|g\|^2\,\mathrm d\mu\right)
 =\eta^2M\|h\|_\mu^2.
\end{equation*}
All terms are finite because $h$ is bounded, $\mu$ has finite mass,
and $M<\infty$. The trajectory-length contribution is already
included in $\mu$ and $M$.

\noindent\emph{Step 2: Lower-bound the reward change.}
Both responses and the segment joining them lie in $U$. Therefore,
\begin{align*}
 R(\theta^+(w))-R(\theta_{\mathrm{vani}}^+)
 &\ge\nabla R(\theta_{\mathrm{vani}}^+)^\top\Delta(w)
              -\frac{L_R}{2}\|\Delta(w)\|^2\\
 &\ge\eta\langle h,c\rangle_\mu
              -\frac{L_R\eta^2M}{2}\|h\|_\mu^2.
\end{align*}
Using $\lambda\eta L_RM\le1$ and the definition of $\mu$ gives
\[
 R(\theta^+(w))-R(\theta_{\mathrm{vani}}^+)
 \ge\eta\int\left(hc-\frac{h^2}{2\lambda}\right)\,\mathrm d\mu,
\]
which is \eqref{eq:reward_lower_bound}. The bound is exact at $w=1$,
and its first variation there agrees with \eqref{eq:functional_credit}.
\end{proof}

\subsection{Clipped weights and proof of Theorem~\ref{thm:reward_improvement}}
\label{app:reward_improvement}

\begin{proof}
We first optimize the lower bound pointwise, then quantify its gain.

\noindent\emph{Step 1: Maximize the surrogate.}
Completing the square yields
\[
 (w-1)c-\frac{(w-1)^2}{2\lambda}
 =\frac{\lambda c^2}{2}-\frac{(w-1-\lambda c)^2}{2\lambda}.
\]
Its unique maximizer on $[w_{\min},w_{\max}]$ is
$w^+=\clip(1+\lambda c,w_{\min},w_{\max})$.
This function is measurable and feasible. Since the surrogate is
integrable, pointwise maximization also maximizes its expectation,
uniquely up to sets of zero $\mu$-measure.

\noindent\emph{Step 2: Bound the improvement.}
Write $h^+=w^+-1$. The first-order optimality condition, evaluated
against the feasible choice $w=1$, gives
\begin{equation*}
 \left(c-\frac{h^+}{\lambda}\right)(1-w^+)\le0,
 \qquad h^+c\ge\frac{(h^+)^2}{\lambda}.
\end{equation*}
Integrating and applying Proposition~\ref{prop:reward_lower_bound},
\[
 R(\theta^+(w^+))-R(\theta_{\mathrm{vani}}^+)
 \ge\frac{\eta}{2\lambda}\|w^+-1\|_\mu^2\ge0.
\]
This is \eqref{eq:reward_improvement}. The gain is strictly positive
when $w^+\ne1$ on a set of positive $\mu$-measure. Because $1$ lies
strictly inside the feasible interval, this is equivalent to nonzero
credit on a set of positive measure. If the credit vanishes almost
everywhere, the update recovers vanilla OPD.
\end{proof}

\FloatBarrier

\section{Supplementary Experimental Results}
\label{app:supp}

This section provides additional experimental results and training dynamics.
All quantities below are measured on the student's own training rollouts.

Figure~\ref{fig:curve-reward-row} shows the training-reward dynamics for the experiments in Table \ref{tab:s2w}.

\begin{figure}[!ht]
\centering
\captionsetup[subfigure]{labelformat=empty}

\begin{minipage}{0.49\linewidth}
    \centering
    Instruct student: Qwen3-1.7B
\end{minipage}
\hfill
\begin{minipage}{0.49\linewidth}
    \centering
    Base student: Qwen3-1.7B-Base
\end{minipage}

\vspace{0.35em}

\begin{subfigure}[t]{0.24\linewidth}
    \centering
    \includegraphics[width=\linewidth]{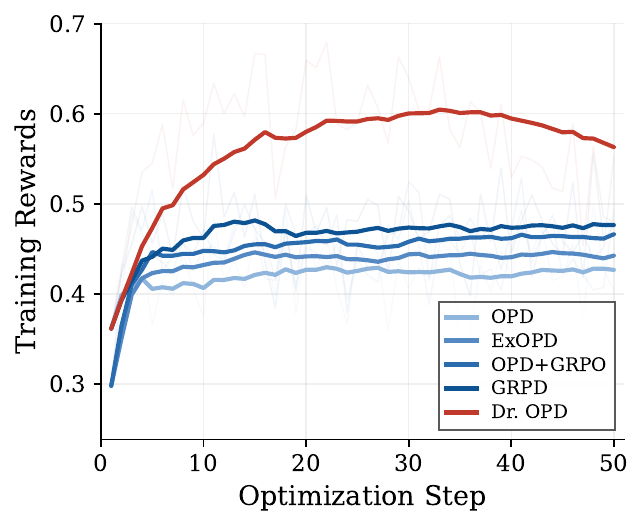}
    \caption{Math}
    \label{fig:curve-reward-math-1p7binstruct}
\end{subfigure}
\hfill
\begin{subfigure}[t]{0.24\linewidth}
    \centering
    \includegraphics[width=\linewidth]{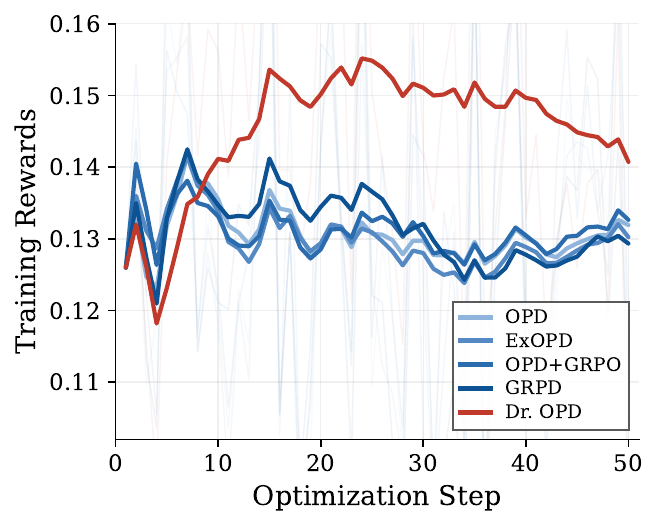}
    \caption{Code}
    \label{fig:curve-reward-code-1p7binstruct}
\end{subfigure}
\hfill
\begin{subfigure}[t]{0.24\linewidth}
    \centering
    \includegraphics[width=\linewidth]{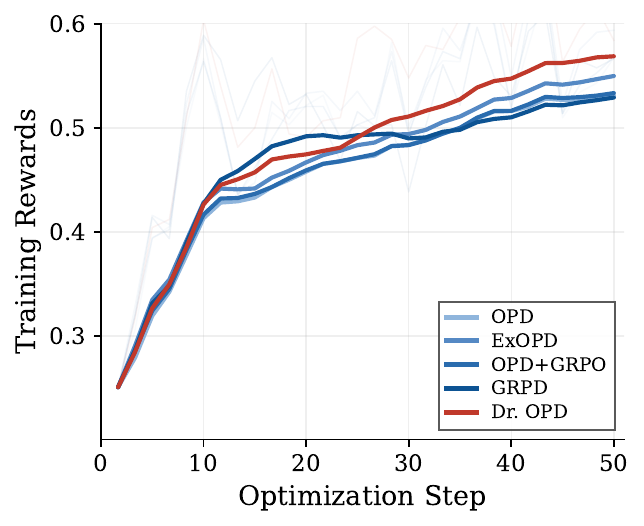}
    \caption{Math}
    \label{fig:curve-reward-math-1p7bbase}
\end{subfigure}
\hfill
\begin{subfigure}[t]{0.24\linewidth}
    \centering
    \includegraphics[width=\linewidth]{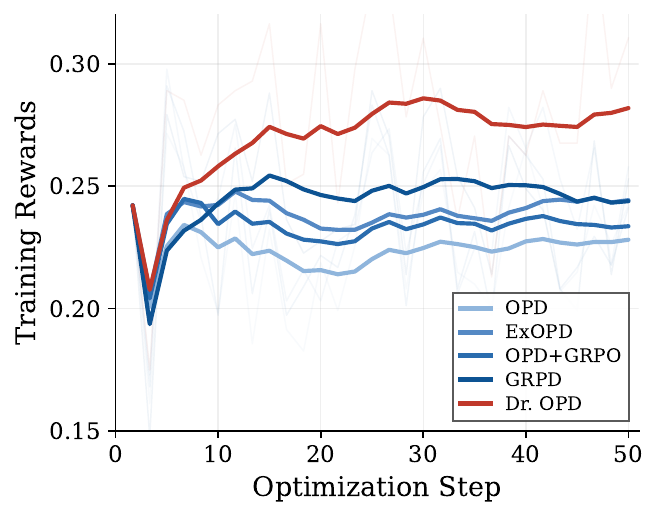}
    \caption{Code}
    \label{fig:curve-reward-code-1p7bbase}
\end{subfigure}

\caption{Dynamics of training rewards across students and tasks, computed by verifier accuracy of the student's own rollouts during training.
Lines are exponential moving averages ($\alpha=0.95$).
}
\label{fig:curve-reward-row}
\end{figure}

Figure~\ref{fig:curve-4bbase} shows the training dynamics for the
Qwen3-4B-Base student distilled from an RL-trained teacher of the same model
size. 
The training reward is the fraction of rollouts receiving a positive outcome
reward, and the response length is the average number of generated tokens per
rollout.
Although the student and teacher differ only in post-training, Dr. OPD
maintains a clear advantage in training reward.

\begin{figure}[!ht]
\centering

\makebox[\linewidth][c]{\begin{subfigure}[t]{0.25\linewidth}
  \centering
  \includegraphics[width=\linewidth]{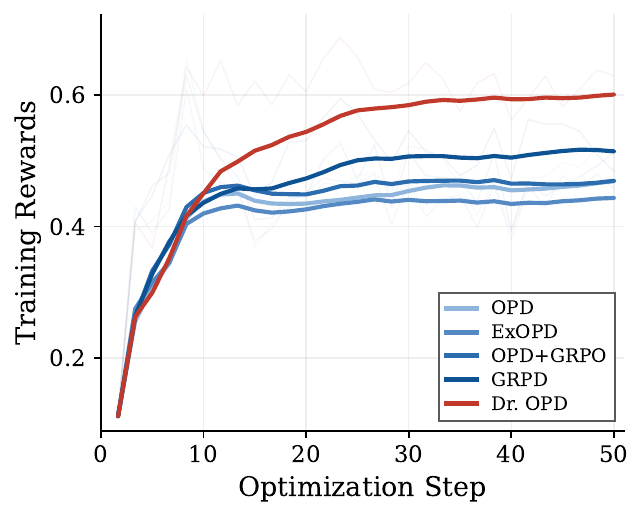}
  \caption{Training reward}
  \label{fig:curve-4bbase-reward}
\end{subfigure}
\hspace{0.03\linewidth}
\begin{subfigure}[t]{0.25\linewidth}
  \centering
  \includegraphics[width=\linewidth]{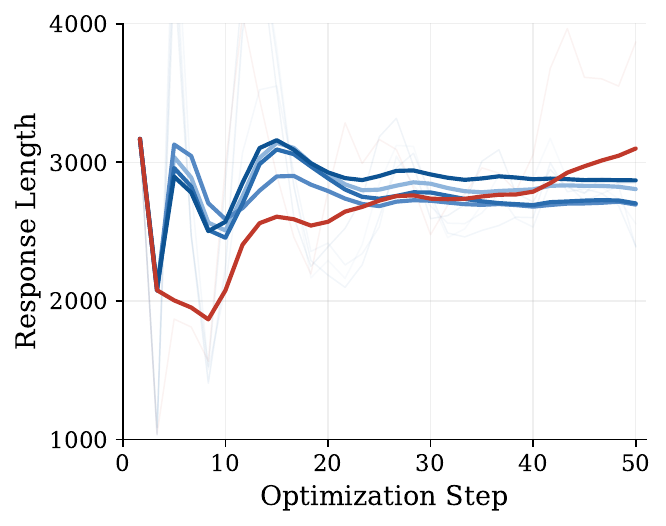}
  \caption{Response length}
  \label{fig:curve-4bbase-len}
\end{subfigure}}

\caption{Training dynamics for the Qwen3-4B-Base student distilled from the
Qwen3-4B-Base-RL teacher on math.
Thick lines show exponential moving averages ($\alpha=0.95$).
}
\label{fig:curve-4bbase}
\end{figure}

Figure~\ref{fig:lambda-stability} further examines the effect of the weighting
strength $\lambda$, with $\lambda=0$ corresponding to vanilla OPD. All tested
positive values of $\lambda$ achieve higher training rewards than vanilla OPD
and exhibit broadly similar reward curves. In contrast, response length can
vary substantially for larger values of $\lambda$, with differences of
thousands of tokens late in training. 
\begin{figure}[!ht]
\centering

\makebox[\linewidth][c]{\begin{subfigure}[t]{0.25\linewidth}
  \centering
  \includegraphics[width=\linewidth]{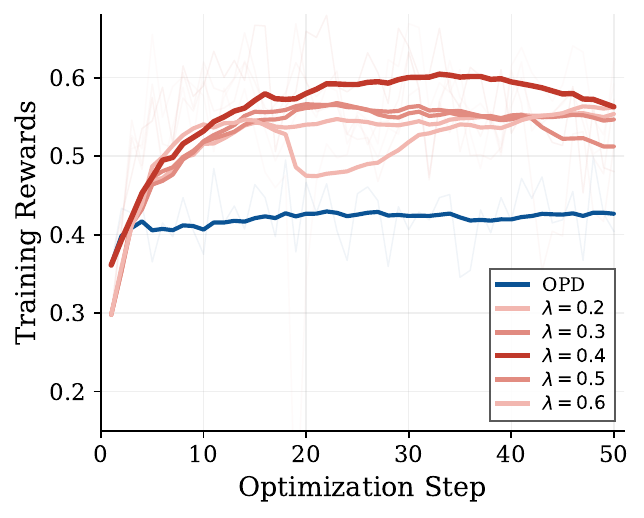}
  \caption{Training reward}
  \label{fig:curve-reward}
\end{subfigure}
\hspace{0.025\linewidth}
\begin{subfigure}[t]{0.25\linewidth}
  \centering
  \includegraphics[width=\linewidth]{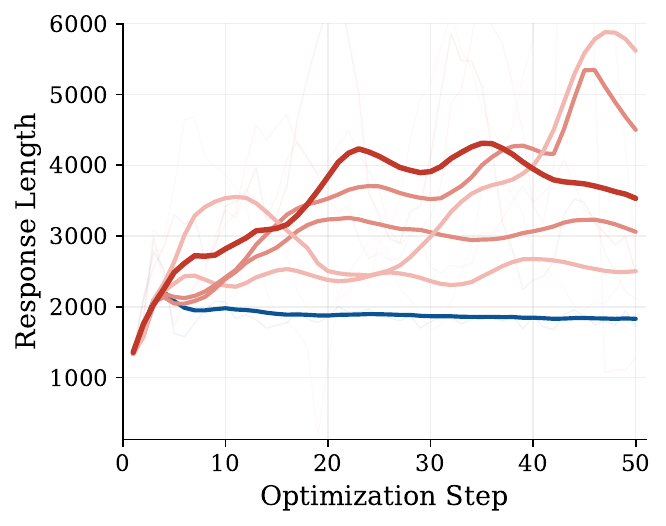}
  \caption{Response length}
  \label{fig:curve-len}
\end{subfigure}
\hspace{0.025\linewidth}
\begin{subfigure}[t]{0.25\linewidth}
  \centering
  \includegraphics[width=\linewidth]{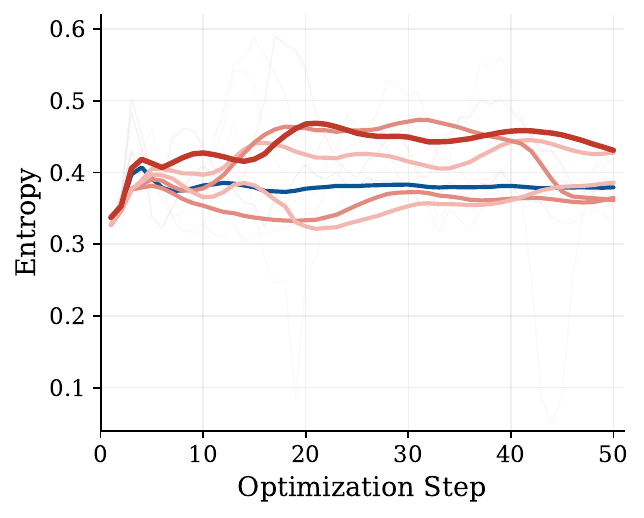}
  \caption{Entropy}
  \label{fig:curve-entropy}
\end{subfigure}}

\caption{Training dynamics under different values of $\lambda$.
(a)~Training reward, (b)~response length, and (c)~entropy over training
steps for vanilla OPD ($\lambda=0$) and Dr. OPD with
$\lambda\in\{0.2,0.3,0.4,0.5,0.6\}$, using the Qwen3-1.7B student and Qwen3-4B teacher. Thick lines show exponential moving averages ($\alpha=0.95$).
}
\label{fig:lambda-stability}
\end{figure}

\end{document}